%% file: neurips_2026_arxiv.tex
\documentclass{article}

 \usepackage[preprint]{neurips_2026}

\usepackage[utf8]{inputenc} % allow utf-8 input
\usepackage[T1]{fontenc}    % use 8-bit T1 fonts
\usepackage{hyperref}       % hyperlinks
\usepackage{url}            % simple URL typesetting
\usepackage{booktabs}       % professional-quality tables
\usepackage{amsfonts}       % blackboard math symbols
\usepackage{nicefrac}       % compact symbols for 1/2, etc.
\usepackage{microtype}      % microtypography
\usepackage{xcolor}         % colors

\usepackage{algorithm}
\usepackage{algpseudocode}
\usepackage{graphicx}
\usepackage{dsfont}
\usepackage{bm}

\usepackage{amsmath,amsthm,amssymb}
\usepackage{mathrsfs}
\usepackage{arydshln} % 加载虚线支持包
\usepackage{subcaption}
\usepackage{wrapfig}
\usepackage{multirow}
\usepackage{enumitem} % 控制itemize行宽
\usepackage[table]{xcolor} % 用于表格颜色
\usepackage[most]{tcolorbox}

\definecolor{gptbar}{RGB}{230,232,236}
\definecolor{qwenbar}{RGB}{228,220,246}
\definecolor{my_approach}{RGB}{234,242,255}
\definecolor{my_approach_heavy}{RGB}{203,216,230}

\usepackage{xcolor}
\usepackage{fancyvrb}
\usepackage{fvextra}

\tcbset{
  promptbox/.style={
    enhanced,
    breakable,
    colback=white,
    colframe=blue!35!black,
    colbacktitle=blue!15,
    coltitle=black,
    fonttitle=\bfseries,
    boxrule=1.2pt,
    arc=2mm,
    left=2mm,
    right=2mm,
    top=2mm,
    bottom=2mm,
  }
}

\theoremstyle{plain}
\newtheorem{theorem}{Theorem}[section]
\newtheorem{proposition}[theorem]{Proposition}

\theoremstyle{definition}

\theoremstyle{remark}

\title{ERSkill: Evolving for Skill-Guided Adaptive Memory Retrieval}

\author{
\textbf{Haolong Chen}$^{1,2,3}$, \textbf{Liang Zhang}$^{4}$, \textbf{Zhuo Li}$^{1,2,3}$, \textbf{Lei Xue}$^{4}$, \textbf{Guangxu Zhu}$^{1,2,3,5}$ \\
$^1$Shenzhen International Center for Industrial and Applied Mathematics \\
$^2$Shenzhen Research Institute of Big Data \\
$^3$The Chinese University of Hong Kong, Shenzhen \\
$^4$Shenzhen Campus of Sun Yat-sen University \\
$^5$Shenzhen Loop Area Institute \\
\texttt{haolongchen1@link.cuhk.edu.cn,} 
\texttt{zhangliang27@mail.sysu.edu.cn,} \\
\texttt{221019088@link.cuhk.edu.cn,} 
\texttt{xuelei3@mail.sysu.edu.cn,} 
\texttt{gxzhu@sribd.cn}
}

\begin{document}

\maketitle

% \begin{abstract}
% Large Language Model (LLM) agents rely on long-term memory to support persistent interactions, yet the retrieval side of memory use remains underexplored as an object of self-evolution.
% This is limiting for heterogeneous memory queries, which may require different evidence construction behaviors.
% We propose \textbf{ERSkill}, a retrieval-centric framework for self-evolving, skill-guided memory retrieval.
% ERSkill compiles interaction history into retrieval-oriented memory storage and represents retrieval behavior as executable skills composed from retrieval primitives.
% At inference time, a trained router selects a skill for each query according to its information demand, and the selected skill constructs evidence for answer generation.
% During training, ERSkill co-evolves the skill set and router with an experience trie that records explored primitive paths and a double-frontier mechanism that separates skill capability expansion from router-facing deployment.
% Experiments on agent memory benchmarks show that ERSkill outperforms strong non-evolving and self-evolving baselines, improving the overall average across F1, BLEU-1, and LLM-as-a-Judge scores by 31.3\% with \texttt{Qwen3-Next-80B-A3B-Instruct} and by 28.1\% with \texttt{GPT-5.4-nano}.
% \end{abstract}

\begin{abstract}
While Large Language Model (LLM) agents increasingly rely on long-term memory for persistent interactions, the retrieval mechanisms governing this memory are rarely treated as evolvable components. This static approach limits performance on heterogeneous memory queries, which often demand diverse evidence construction strategies. To address this, we introduce \textbf{ERSkill}, a retrieval-centric framework for self-evolving, skill-guided memory access. ERSkill compiles interaction histories into a structured memory store and represents retrieval behaviors as executable skills composed of fundamental primitives. At inference time, a trained router dynamically matches each query to the optimal skill to construct tailored evidence for answer generation. To enable continuous improvement, ERSkill co-evolves the skill set and the router during training. It employs an experience trie to efficiently record explored retrieval paths, alongside a double-frontier mechanism that safely decouples the expansion of new skill capabilities from stable, router-facing deployment. Experiments across multiple agent memory benchmarks demonstrate that ERSkill substantially outperforms strong non-evolving and self-evolving baselines. Notably, it improves the overall average across F1, BLEU-1, and LLM-judge scores by 31.3\% with Qwen3-Next-80B-A3B-Instruct and by 28.1\% with GPT-5.4-nano.
\end{abstract}

\input{section/1_Intro}
\input{section/2_Method}

\input{section/3_Experiments}

\input{section/4_Related}

\input{section/5_Conclusion}

% ===================== ACKNOWLEDGEMENT ONLY FOR FINAL PAPER =====================
% \begin{ack}
% Use unnumbered first level headings for the acknowledgments. All acknowledgments
% go at the end of the paper before the list of references. Moreover, you are required to declare
% funding (financial activities supporting the submitted work) and competing interests (related financial activities outside the submitted work).
% More information about this disclosure can be found at: \url{https://neurips.cc/Conferences/2026/PaperInformation/FundingDisclosure}.

% Do {\bf not} include this section in the anonymized submission, only in the final paper. You can use the \texttt{ack} environment provided in the style file to automatically hide this section in the anonymized submission.
% \end{ack}
% ===================== ACKOWLEDGEMENT ONLY FOR FINAL PAPER =====================

\bibliographystyle{unsrt}
\bibliography{ref}

% \section*{References}

% References follow the acknowledgments in the camera-ready paper. Use unnumbered first-level heading for
% the references. Any choice of citation style is acceptable as long as you are
% consistent. It is permissible to reduce the font size to \verb+small+ (9 point)
% when listing the references.
% Note that the Reference section does not count towards the page limit.
% \medskip

% {
% \small

% [1] Alexander, J.A.\ \& Mozer, M.C.\ (1995) Template-based algorithms for
% connectionist rule extraction. In G.\ Tesauro, D.S.\ Touretzky and T.K.\ Leen
% (eds.), {\it Advances in Neural Information Processing Systems 7},
% pp.\ 609--616. Cambridge, MA: MIT Press.

% [2] Bower, J.M.\ \& Beeman, D.\ (1995) {\it The Book of GENESIS: Exploring
%   Realistic Neural Models with the GEneral NEural SImulation System.}  New York:
% TELOS/Springer--Verlag.

% [3] Hasselmo, M.E., Schnell, E.\ \& Barkai, E.\ (1995) Dynamics of learning and
% recall at excitatory recurrent synapses and cholinergic modulation in rat
% hippocampal region CA3. {\it Journal of Neuroscience} {\bf 15}(7):5249-5262.
% }

%%%%%%%%%%%%%%%%%%%%%%%%%%%%%%%%%%%%%%%%%%%%%%%%%%%%%%%%%%%%

\newpage
\tableofcontents

\newpage
\input{section/6_Appendix}

%%%%%%%%%%%%%%%%%%%%%%%%%%%%%%%%%%%%%%%%%%%%%%%%%%%%%%%%%%%%

% \newpage
% \input{checklist.tex}

\end{document}

%% file: section/1_Intro.tex
\section{Introduction}

\begin{wrapfigure}{r}{0.52\textwidth}
    % \vspace{-1.2em}
    \vspace{-2.2em} % arxiv
    \centering
    \includegraphics[width=0.52\textwidth]{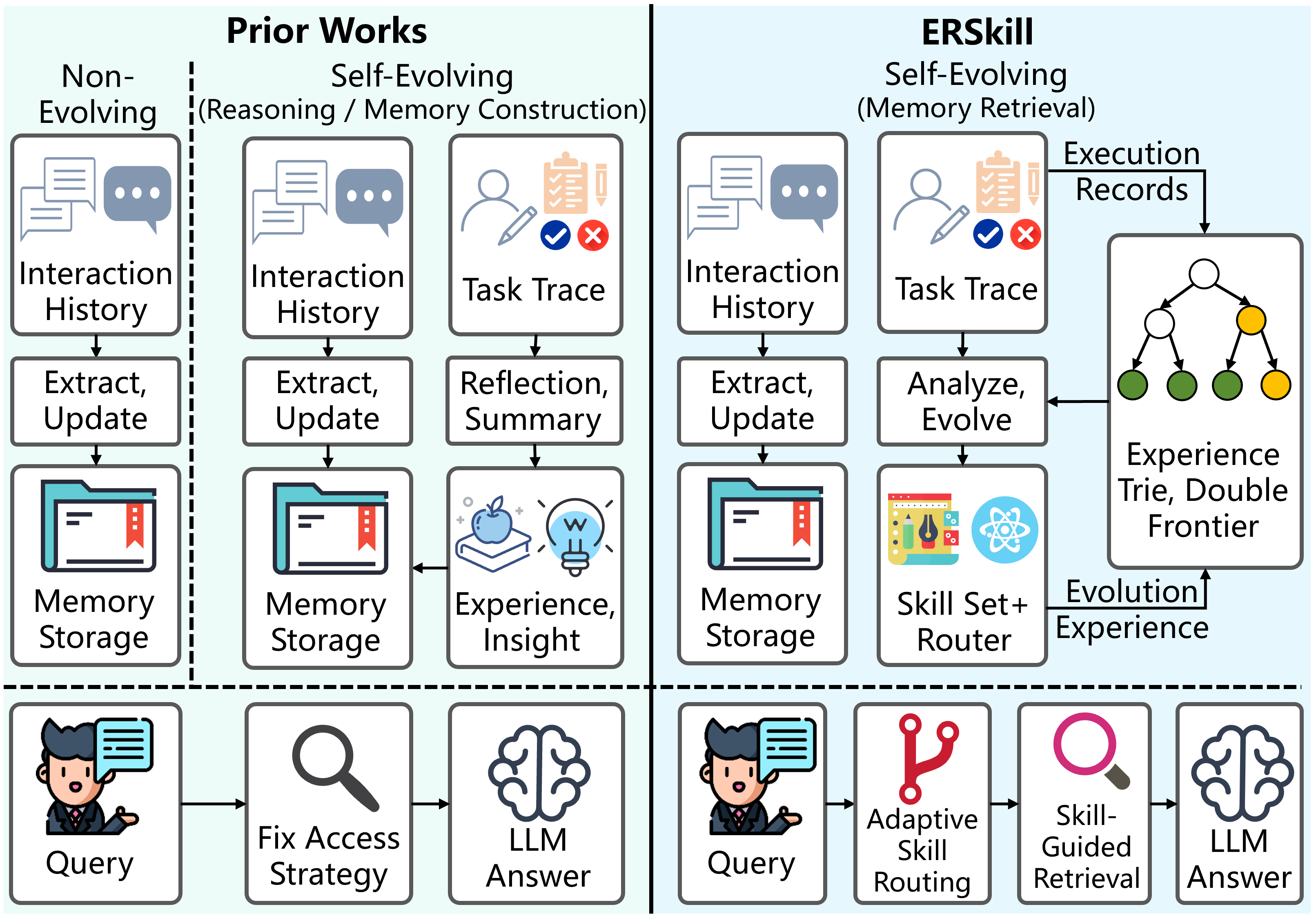}
    \vspace{-1.4em}
    \caption{
    \textit{Left}: Prior works include non-evolving methods and self-evolving methods for reasoning or memory construction.
    \textit{Right}: ERSkill studies self-evolving memory retrieval through skill-guided adaptive access.
    }
    \label{fig:paradigm}
    \vspace{-1.2em}
\end{wrapfigure}

Large Language Model (LLM)~\citep{openai2024gpt4technicalreport,geminiteam2025geminifamilyhighlycapable,deepseekai2025deepseekv3technicalreport,yang2025qwen3} agents are increasingly expected to function as persistent collaborators rather than one-shot assistants~\citep{hu2025memory}.
Over extended interactions spanning weeks or months, an agent accumulates user preferences, tracks changing events, remembers prior decisions, and reuses experiences, motivating rapid progress in agent memory~\citep{huang2026rethinking}.
Recent work has made progress toward persistent agents by building explicit external memory to store interaction history.
These systems construct and maintain memory through operations such as information extraction, compression, updating, and forgetting~\citep{xu2025mem,kang2025memory,fang2025lightmem}, and retrieve stored memories as evidence for downstream tasks.
More recent work further moves toward self-evolving agents: instead of only storing key information, agents reflect on past trajectories, distill reusable experiences or insights, store and apply them to guide future behavior~\citep{zhao2024expel,suzgun2026dynamic,ouyang2025reasoningbank,agrawal2025gepa,zhang2026memskill}.

Despite this progress, the retrieval side of agent memory remains underexplored as an object of evolution.
Existing self-evolving methods mainly use past task experience to improve future reasoning~\citep{zhao2024expel,suzgun2026dynamic,ouyang2025reasoningbank,agrawal2025gepa}; for example, ReasoningBank~\citep{ouyang2025reasoningbank} reflects on past task traces to distill reusable reasoning experiences.
MemSkill~\citep{zhang2026memskill} evolves LLM-based memory extraction skills, but the resulting memories are still accessed through a predefined dense retrieval strategy at query time.
Thus, while memory content and reasoning guidance become increasingly adaptive, retrieval behavior itself often remains predefined.
This becomes limiting for agent memory question answering, where queries can differ substantially in their information demands.
For example, ``What gift did Alice buy for Bob during her Hawaii trip?'' primarily requires retrieving a specific event, whereas ``Why did Alice later stop planning another Hawaii trip with Bob?'' requires connecting earlier events with later developments to uncover causal relations.
Such queries require qualitatively different evidence construction behaviors, raising a central question about query-time memory access:

\begin{tcolorbox}[
  colback=my_approach,
  colframe=my_approach_heavy,
  colbacktitle=gray!35,
  coltitle=black,
  boxrule=1pt,
  arc=5pt,
  drop shadow,
  parbox=false,
  before skip=3pt,
  after skip=3pt,
  left=3pt,
  right=3pt,
  top=3pt,
  bottom=3pt,
  boxsep=1pt,
]
\textit{How can an LLM agent evolve and learn to compose complex retrieval actions so that its retrieval behavior adapts to the heterogeneous information demands of different queries?}
\end{tcolorbox}

In this paper, we propose \textbf{ERSkill} (Evolving Retrieval Skill), a retrieval-centric self-evolving framework that models agent memory access as skill-guided~\cite{anthropic2025agentskills} evidence construction.
To support diverse retrieval perspectives, ERSkill first builds a retrieval-oriented memory storage that exposes memory via a library of retrieval primitives, such as dense retrieval~\cite{karpukhin2020dense}, BM25 retrieval~\cite{robertson2009probabilistic}, and query rewriting~\cite{ma2023query}.
Based on these primitives, ERSkill represents retrieval behavior as a set of executable retrieval skills.
Each skill specifies a concrete retrieval schema by composing primitives into a sequence.
This skill abstraction makes retrieval behavior reusable, interpretable, and refinable.
To choose the appropriate retrieval behavior for each query, ERSkill trains a skill router that matches the query's information demand to skills.
At inference time, the selected skill controls how memory atoms are gathered, expanded, and organized into a task-specific evidence view for answer generation.
In this way, ERSkill realizes query-adaptive memory access while keeping the retrieval process executable and interpretable.

However, a predefined static set of retrieval skills fundamentally limits the expressiveness of memory access, as retrieval requirements vary across queries.
We therefore design a skill-router co-evolution procedure that jointly updates the skill set and the router by analyzing previous task traces and identifying gaps between current skill behavior and desired evidence construction.
To reduce inefficient exploration, ERSkill stores explored primitive paths in an experience trie, allowing the skill generator to reuse past rollout experience and avoid repeatedly proposing equivalent retrieval programs.
Moreover, because the router may not always select the best skill during deployment, skill evolution should consider not only whether a skill is useful in principle, but also whether its utility can be reliably activated by the router.
To this end, ERSkill uses a Pareto-style double-frontier mechanism: the capability frontier maintains a compact set of skills with the best retrieval capabilities, while the deploy frontier maintains a router-facing skill set validated under routed inference, allowing ERSkill to expand retrieval capability while keeping deployment stable.

In summary, our contributions are as follows:
\begin{itemize}[itemsep=0em, topsep=0em, leftmargin=1em]
    \item We introduce a retrieval-centric framework for agent memory, treating memory access as query-adaptive evidence construction rather than a fixed retrieval strategy.
    \item We propose ERSkill, a skill-guided framework that co-evolves retrieval skills and the router using an experience trie and a double-frontier design.
    \item Experiments on agent memory benchmarks show that ERSkill outperforms strong baselines, improving the overall average across F1, BLEU-1, and LLM-as-a-Judge scores by 31.3\% with Qwen3-Next-80B-A3B-Instruct and by 28.1\% with GPT-5.4-nano. ERSkill also achieves a leading cost-performance trade-off.
\end{itemize}

%% file: section/2_Method.tex
\section{Methodology}

\subsection{Overview}

\begin{figure*}[t]
    \centering
    \includegraphics[width=\textwidth]{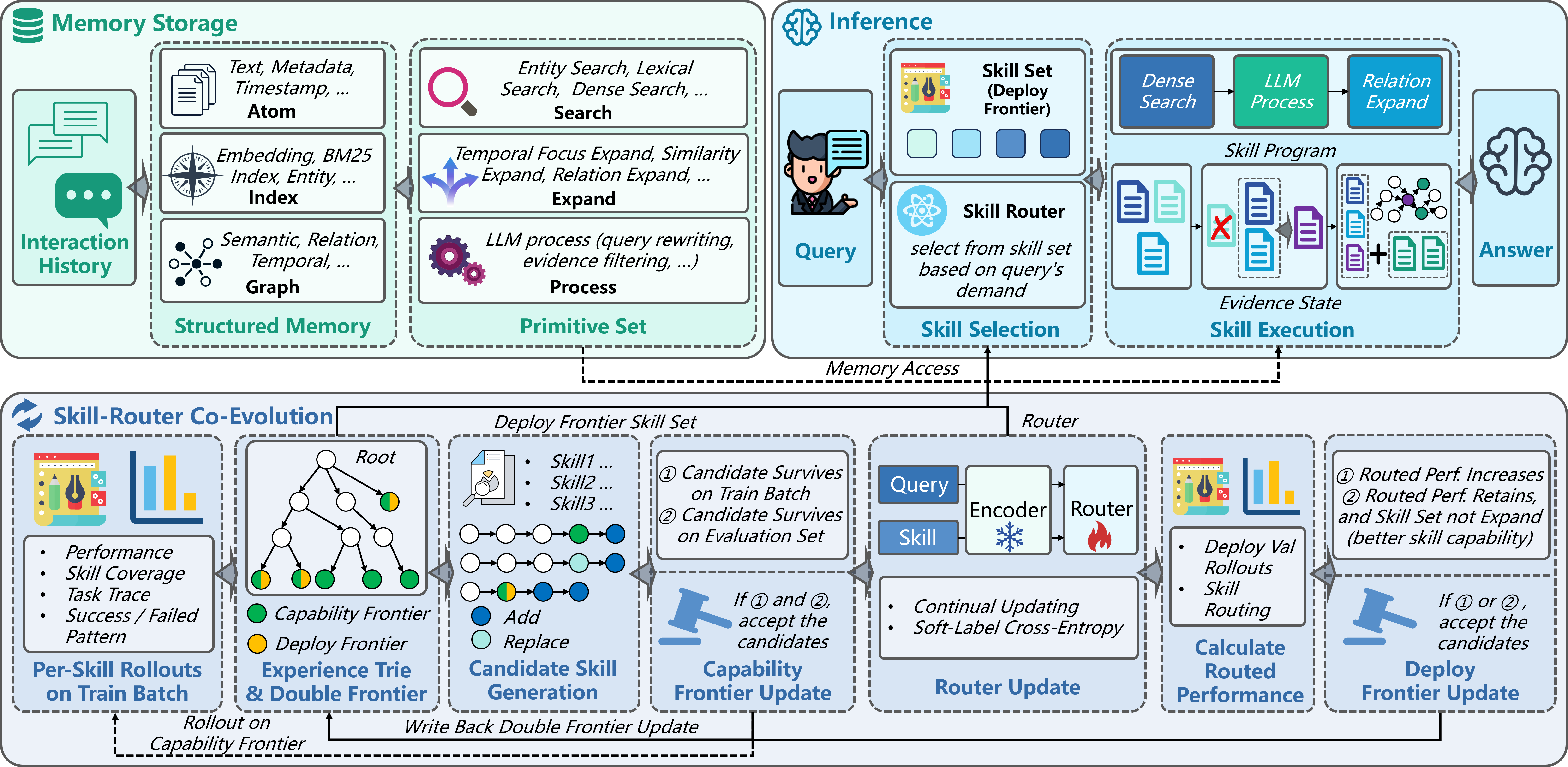}
    \caption{
    Overview of ERSkill.
    ERSkill compiles interaction history into memory storage, builds retrieval skills from primitives, and selects a query-matched skill to construct evidence for answering.
    During training, the skills and the router co-evolve, with an experience trie recording explored paths and double frontiers supporting skill capability expansion and router-facing deployment.
    }
    \label{fig:overview}\vspace{-0.5em}
\end{figure*}

As shown in Figure~\ref{fig:overview}, \textbf{ERSkill} adapts memory retrieval by selecting, for each query, a retrieval skill that matches its information demand.
It first compiles the interaction history into memory storage, then uses a trained router to select the most suitable skill, and finally executes the selected skill to construct evidence for answering.
During training, the skills and the router co-evolve.

\subsection{Memory Storage}

\paragraph{Structured memory.}
Let $D$ be the interaction history.
ERSkill splits $D$ into atom-level records $\mathcal{A}=\{a_1,\ldots,a_n\}$, where each $a_i$ stores the atom text, metadata, and timestamp.
The compiled memory is $M(D)=(\mathcal{A},\mathcal{I},\mathcal{G})$, where $\mathcal{I}$ is a collection of indexes for atom search and $\mathcal{G}$ is a collection of graphs for atom expansion.
Indexes provide entry points into candidate atoms (e.g., embedding indexes for dense retrieval and entity-to-atom indexes for entity-based retrieval), while graphs connect atoms for expansion (e.g., similarity-based).
Appendix~\ref{appen:structured_memory} summarizes the construction details.

\paragraph{Retrieval primitive.}
ERSkill's memory is not merely a storage, but an executable substrate for constructing query-specific evidence. Specifically, ERSkill exposes the memory storage through a primitive library $\mathcal{P}=\{p_1,\ldots,p_m\}$, where each primitive is a state transition $p:(q,s,M(D))\mapsto s'$.
Here, $q$ is the query, $s$ is the current evidence state, and $s'$ is the updated state.

Retrieval primitives use the indexes in $\mathcal{I}$ to access memory from different perspectives. Formally, we initialize: (1) \texttt{entity\_search} that identifies query-relevant entities and retrieves atoms through the entity--atom graph; (2) \texttt{lexical\_search} that performs BM25-style surface-form matching; and (3) \texttt{dense\_search} that retrieves atoms by query--atom embedding similarity. Besides, we define that expansion primitives use the graphs in $\mathcal{G}$ to grow the evidence state. Specifically, \texttt{temporal\_focus\_expand} adds atoms from a given temporal span, \texttt{similarity\_expand} propagates along similarity edges, and \texttt{relation\_expand} follows typed relation edges.
We also include \texttt{llm\_process} for query rewriting, evidence filtering, and other dynamic control operations.
The primitive library remains fixed throughout evolution; skills differ only in how these primitives are composed.
Because all skills share the same primitives, their outcomes can be accumulated in a common experience structure.
Appendix~\ref{appen:structured_memory} provides the details of the primitive library.

\begin{figure*}[t]
    \centering
    \includegraphics[width=\textwidth]{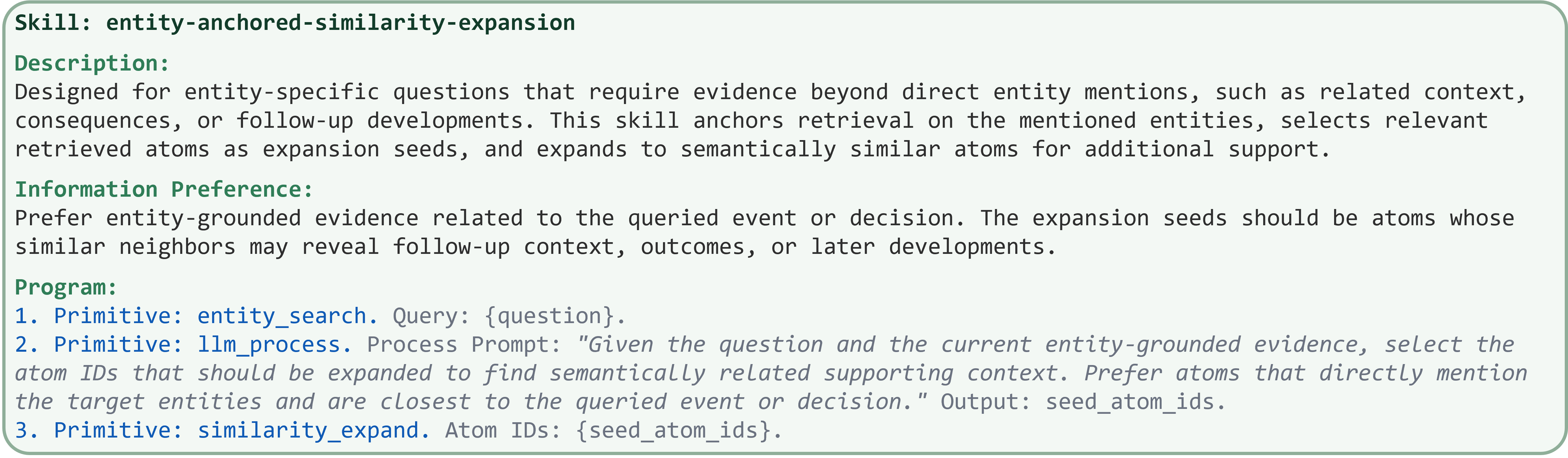}
    \caption{
    Skill sample illustrating an entity-anchored similarity expansion path.
    The skill first retrieves entity-grounded atoms, then selects seed atoms, and finally expands to semantically similar atoms for supporting context.
    }
    \label{fig:skill_sample}\vspace{-0.5em}
\end{figure*}

\subsection{Inference}

% \paragraph{Retrieval skills.}
% A retrieval skill is an executable primitive path (or, primitive program) $\kappa=(c_\kappa, \rho_\kappa, O_\kappa)$, where $c_\kappa$ it the skill description, $\rho_\kappa=(p_{\kappa,1},\ldots,p_{\kappa, L_\kappa})$ is the primitive sequence, and $O_\kappa$ maps the final evidence state to an evidence view.
% When selected, $\kappa$ executes its primitives in order by applying $s_j=p_{\kappa,j}(q,s_{j-1},M(D))$ and returns the evidence view $e=O_\kappa(s_{L_\kappa})$.
% The state stores the current evidence set and execution context, allowing the system to align collected information with the evidence need of the query. We store each skill as a markdown file. We show a skill sample in Figure~\ref{fig:skill_sample}.

% \paragraph{Skill routing and answer generation.}
% Given a query $q$ and a skill set $\mathcal{K}$, the router selects a skill according to the query's information demand, formalized as $\hat{\kappa}=\arg\max_{\kappa\in\mathcal{K}} R_\theta(\kappa\mid q, \mathcal{K})$.
% The selected skill $\hat{\kappa}$ is executed over $M(D)$ to produce an evidence view $e$, from which the answer generator outputs $\hat{y}=f_{\mathrm{LLM}}(q,e)$.
% In this way, ERSkill enables adaptive query-time memory access through explicit skill selection and execution.

\paragraph{Retrieval skills.}
A retrieval skill is an executable primitive program $\kappa=(c_\kappa,\rho_\kappa)$, where $c_\kappa$ is the skill description and the information preference, and $\rho_\kappa=(p_{\kappa,1},\ldots,p_{\kappa,L_\kappa})$ is the primitive sequence.
Given query $q$, the skill executes its primitives sequentially by applying $s_j=p_{\kappa,j}(q,s_{j-1},M(D))$ and returns $s_{L_\kappa}$ as the evidence view.
The state maintains the current evidence set and execution context, enabling the skill to organize retrieved information according to the query's evidence needs.
Each skill is stored as a markdown file; Figure~\ref{fig:skill_sample} shows an example.

\paragraph{Skill router.}
% ERSkill uses a pairwise scoring router $R_{\theta}(\cdot)$ that is parametrized by $\theta$ to support evolving skill sets.
% Instead of treating routing as classification over fixed skill, $R_{\theta}$ scores each query--skill pair by how well the skill matches the query's information demand.
% Let $d_\kappa$ denote the content of skill $\kappa$.
% First, we employ a text encoder $\mathrm{Enc}(\cdot)$ to project the query and skill content as $h_q=\mathrm{Enc}(q)$ and $h_\kappa=\mathrm{Enc}(d_\kappa)$. Then $R_{\theta}$ computes a match score $u_\theta(q,\kappa)=\mathrm{Score}_\theta(h_q,h_\kappa)$.
% For a candidate skill set $\mathcal{K}$, the scores are normalized over $\mathcal{K}$ to obtain the routing distribution:
ERSkill employs a query-conditioned routing model to select retrieval skills. Given a query $q$ and a retrieval skill set $\mathcal{K}$, the router assigns a relevance score to each skill based on its compatibility with the query. Specifically, we encode the query $q$ and each skill $\kappa$ into vector representations using a shared encoder $\mathrm{Enc}(\cdot)$, resulting in $h_q=\mathrm{Enc}(q)$ and $h_\kappa=\mathrm{Enc}(\kappa)$, where $\kappa\in\mathcal{K}$. A learnable scoring function $u_\theta(q,\kappa)$ is then used to measure the compatibility between the query $q$ and the skill $\kappa$, reflecting the performance of applying skill $\kappa$ to the query $q$. The routing model $R_\theta(\cdot)$ is defined by normalizing these scores into a probability distribution over skills:
\begin{equation}
\label{eq:router-distribution}
R_\theta(\kappa\mid q,\mathcal{K})
=
\frac{\exp(u_\theta(q,\kappa))}
{\sum_{\kappa'\in\mathcal{K}}\exp(u_\theta(q,\kappa'))},
\end{equation}
which allows the router to score newly evolved skills from their textual descriptions, information preferences, and programs without requiring changes to the output space. We keep the text encoder frozen and optimize only the routing parameters.

% \paragraph{Skill routing and answer generation.}
% Given a query $q$ and skill set $\mathcal{K}$, the router selects the skill that best matches the query's information demand, i.e., $\hat{\kappa}=\arg\max_{\kappa\in\mathcal{K}}R_\theta(\kappa\mid q,\mathcal{K})$.
% ERSkill then executes $\hat{\kappa}$ over $M(D)$ to refine evidence state $s_{L_{\hat{\kappa}}}$, and generates the answer as $\hat{y}=f_{\mathrm{LLM}}(q,s_{L_{\hat{\kappa}}})$.
% In this way, ERSkill enables adaptive query-time memory access through skill selection and execution.

\paragraph{Skill routing and answer generation.}
Given a query $q$ and skill set $\mathcal{K}$, the router selects the skill that best matches the query's information demand.
We denote this routed skill as $\pi_\theta(q;\mathcal{K})=\arg\max_{\kappa\in\mathcal{K}}R_\theta(\kappa\mid q,\mathcal{K})$, and write $\hat{\kappa}=\pi_\theta(q;\mathcal{K})$ at inference time.
ERSkill then executes $\hat{\kappa}$ over $M(D)$ to refine evidence state $s_{L_{\hat{\kappa}}}$, and generates the answer as $\hat{y}=f_{\mathrm{LLM}}(q,s_{L_{\hat{\kappa}}})$.
In this way, ERSkill enables adaptive query-time memory access through skill selection and execution.

\subsection{Skill-Router Co-Evolution}

ERSkill evolves skills through two Pareto-style frontiers.
A \emph{frontier} is the active boundary of explored skills, retaining compact and non-redundant skills from the evolution history.
The \emph{capability frontier} $\mathcal{C}_t$ preserves skills with oracle-side value, i.e., skills that lead to the best attainable utility when the best skill can be chosen for each query without router errors.
It therefore tracks the retrieval capability discovered so far.
The \emph{deploy frontier} $\mathcal{B}_t$ is the router-facing skill set used at inference time, containing only validation-gated skills that the router can select reliably.
This separation allows ERSkill to explore new retrieval capabilities while exposing only stable skills for deployment.

At evolution step $t$, ERSkill has experience trie $\mathcal{T}_t$, frontiers $\mathcal{C}_t$ and $\mathcal{B}_t$, and router parameters $\theta_t$.
Evolution starts from three single-primitive seed skills: \texttt{semantic-clue} with \texttt{dense\_search}, \texttt{entity-focus} with \texttt{entity\_search}, and \texttt{surface-fact} with \texttt{lexical\_search}; both frontiers are initialized from them, so $\mathcal{B}_0=\mathcal{C}_0$.
Given a training batch $\mathcal{Q}_t$, ERSkill executes every skill in $\mathcal{C}_t$ on every query and records query--skill performance, execution traces, and ability overlaps in $\mathcal{T}_t$.
These experiences reveal weak or redundant regions of the current capability frontier, guiding skill candidate generation, capability frontier update, router update, and deploy frontier update.
Appendix~\ref{appen:algo_evolution} summarizes the full co-evolution algorithm.

\paragraph{Skill candidate generation.}
ERSkill generates new skill candidates by editing paths from the current capability frontier, e.g., adding or replacing primitives.
To reuse past search experience and avoid duplicate exploration, ERSkill maintains an \emph{experience trie} $\mathcal{T}$ over primitive paths.
Since skills are sequences over the fixed primitive library $\mathcal{P}$, each root-to-node path represents a primitive prefix, and shared prefixes are stored only once.\footnote{Although \texttt{llm\_process} may be instantiated with different prompts, we treat it as one primitive type because ERSkill focuses on retrieval-centered evidence construction rather than prompt-centered control.}
This allows ERSkill to detect duplicate candidates.

Each explored path in $\mathcal{T}$ corresponds to a skill and stores its train-batch rollouts, validation rollouts, and frontier status.
The skill generator proposes candidates from skill summaries, rollout statistics, failure/success patterns, and trie summaries, while excluding paths already recorded in $\mathcal{T}_t$.
Thus, both accepted and rejected candidates remain available to guide later evolution.
Appendix~\ref{appen:skill_cand_gen} details the generation procedure, and Appendix~\ref{appen:experience_trie_example} shows an example trie.

\paragraph{Capability frontier update.}
ERSkill updates the capability frontier by retaining skills with unique oracle-side value.
For each query--skill pair, we measure performance by a score $r(q,\kappa)\in[0,1]$, instantiated as LLM-as-a-Judge accuracy in our experiments; the judge prompt is provided in Appendix~\ref{appen:llm_judge_prompt}.
For a batch $\mathcal{Q}$, skill utility is $\mathrm{Util}(\kappa;\mathcal{Q})=\frac{1}{|\mathcal{Q}|}\sum_{q\in\mathcal{Q}}r(q,\kappa)$, and the oracle score profile of a skill set $\mathcal{K}$ is $g_{\mathcal{K}}(q)=\max_{\kappa\in\mathcal{K}}r(q,\kappa)$.
ERSkill uses a frontier recomputation operator $\Phi(\mathcal{K};\mathcal{Q})$ that sorts skills by utility and removes a skill only when doing so does not decrease $g_{\mathcal{K}}(q)$ for any query.
Thus, $\Phi$ preserves oracle performance while pruning redundant skills.

Given generated candidates $\mathcal{U}_t$, ERSkill first computes a train-side temporary frontier $\widetilde{\mathcal{C}}_{t+1}=\Phi(\mathcal{C}_t\cup\mathcal{U}_t;\mathcal{Q}_t)$ and rejects candidates not retained in it.
The retained candidates $\mathcal{V}_t$ are then evaluated on validation data, and the capability frontier is updated as $\mathcal{C}_{t+1}=\Phi(\mathcal{C}_t\cup\mathcal{V}_t;\mathcal{Q}_{\mathrm{val}})$.
This two-stage update uses training batches for efficient filtering and validation data for stable frontier recomputation.

\paragraph{Router update.}
Training uses a window $\mathcal{W}_t$ of rollout instances $(q,\mathcal{K}_q,\{r(q,\kappa)\}_{\kappa\in\mathcal{K}_q})$, where $\mathcal{K}_q$ is the historical skill set used for that rollout.
Let $\mathcal{M}$ be a mini-batch of instances sampled from $\mathcal{W}_t$.
For each instance, the rollout scores induce a soft target distribution over its associated skill set $\mathcal{K}_q$, and the router is optimized with soft-label cross-entropy:
\begin{equation}
\label{eq:router-training}
\tilde{p}(\kappa\mid q,\mathcal{K}_q)=
\frac{\exp(r(q,\kappa))}
{\sum_{\kappa'\in\mathcal{K}_q}\exp(r(q,\kappa'))},
\
\mathcal{L}_{\mathrm{router}}=
-\sum_{(q,\mathcal{K}_q,\cdot)\in\mathcal{M}}
\sum_{\kappa\in\mathcal{K}_q}
\tilde{p}(\kappa\mid q,\mathcal{K}_q)
\log R_\theta(\kappa\mid q,\mathcal{K}_q). \nonumber
\end{equation}

\paragraph{Deploy frontier update.}
After updating the router, ERSkill refreshes the deploy frontier only when the capability frontier changes, using the newly retained candidates $\mathcal{H}_t=\mathcal{U}_t\cap\mathcal{C}_{t+1}$; otherwise, it sets $\mathcal{B}_{t+1}=\mathcal{B}_t$.
For any skill set $\mathcal{K}$, given available rollout scores for all $q\in\mathcal{Q}$ and $\kappa\in\mathcal{K}$, its routed performance is $\mathrm{Routed}(\mathcal{K},\theta;\mathcal{Q})=\frac{1}{|\mathcal{Q}|}\sum_{q\in\mathcal{Q}}r(q,\pi_\theta(q;\mathcal{K}))$.
ERSkill forms a candidate deploy frontier $\mathcal{B}'_t=\Phi(\mathcal{B}_t\cup\mathcal{H}_t;\mathcal{Q}_{\mathrm{val}})$ and computes 
$\Delta_{\mathrm{route}}(\mathcal{B}'_t;\theta_{t+1})
=
\mathrm{Routed}(\mathcal{B}'_t,\theta_{t+1};\mathcal{Q}_{\mathrm{val}})
-
\mathrm{Routed}(\mathcal{B}_t,\theta_{t+1};\mathcal{Q}_{\mathrm{val}})$.
It accepts $\mathcal{B}'_t$ and sets $\mathcal{B}_{t+1}=\mathcal{B}'_t$ if $\Delta_{\mathrm{route}}(\mathcal{B}'_t;\theta_{t+1})\ge\gamma_{\mathrm{route}}$, or if $\Delta_{\mathrm{route}}(\mathcal{B}'_t;\theta_{t+1})\ge-\xi_{\mathrm{drop}}$ and $|\mathcal{B}'_t|\le|\mathcal{B}_t|$, where $\gamma_{\mathrm{route}}$ is a routed-gain margin and $\xi_{\mathrm{drop}}$ is a compactness tolerance that allows a small routed-performance drop to accept a stronger, more compact deploy skill set.
If the candidate is rejected, ERSkill sets $\mathcal{B}_{t+1}=\mathcal{B}_t$.
Accept/reject outcomes are written back to the experience trie, allowing later skill generation to consider router-aware deployment performance.
After training, the final deploy frontier is used for inference.

\begin{proposition}[Oracle-safe two-level frontier update]
\label{prop:oracle-safe-frontier}
Denote the oracle coverage as  $\mathrm{OCov}(\mathcal{K};\mathcal{Q})=\frac{1}{|\mathcal{Q}|}\sum_{q\in\mathcal{Q}}g_{\mathcal{K}}(q)$.
For every evolution step $t$, ERSkill satisfies $\mathrm{OCov}(\mathcal{C}_{t+1};\mathcal{Q}_{\mathrm{val}})\geq \mathrm{OCov}(\mathcal{C}_t;\mathcal{Q}_{\mathrm{val}})$ and $\mathrm{OCov}(\mathcal{B}_{t+1};\mathcal{Q}_{\mathrm{val}})\geq \mathrm{OCov}(\mathcal{B}_t;\mathcal{Q}_{\mathrm{val}})$.
\end{proposition}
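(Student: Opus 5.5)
The argument rests on two facts. The oracle profile is monotone in the skill set. The pruning operator $\Phi$ preserves that profile pointwise on the queries it is evaluated on.

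\textbf{Step 1 (monotonicity of the profile).} For any $\mathcal{K}\subseteq\mathcal{K}'$ and any query $q$, we have $g_{\mathcal{K}'}(q)=\max_{\kappa\in\mathcal{K}'}r(q,\kappa)\ge g_{\mathcal{K}}(q)$, since the maximum is taken over a superset. Averaging over $\mathcal{Q}$ gives $\mathrm{OCov}(\mathcal{K}';\mathcal{Q})\ge\mathrm{OCov}(\mathcal{K};\mathcal{Q})$.

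\textbf{Step 2 ($\Phi$ preserves the profile on $\mathcal{Q}$).} By definition, $\Phi(\mathcal{K};\mathcal{Q})$ scans skills in utility order and removes a skill only if the removal leaves $g(q)$ unchanged for every $q\in\mathcal{Q}$. I would argue by induction over the sequence of removals. Each removal keeps the current profile equal to the previous one on all of $\mathcal{Q}$, so at the end $g_{\Phi(\mathcal{K};\mathcal{Q})}(q)=g_{\mathcal{K}}(q)$ for all $q\in\mathcal{Q}$. Hence $\mathrm{OCov}(\Phi(\mathcal{K};\mathcal{Q});\mathcal{Q})=\mathrm{OCov}(\mathcal{K};\mathcal{Q})$.

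The point I expect to need care is that $\Phi$ must be applied with the same query set on which $\mathrm{OCov}$ is measured. A skill removed because it was redundant on a training batch could still matter on $\mathcal{Q}_{\mathrm{val}}$. I would check that both final recomputations in the method use $\mathcal{Q}_{\mathrm{val}}$, and they do:
\begin{itemize}
\item $\mathcal{C}_{t+1}=\Phi(\mathcal{C}_t\cup\mathcal{V}_t;\mathcal{Q}_{\mathrm{val}})$;
\item $\mathcal{B}'_t=\Phi(\mathcal{B}_t\cup\mathcal{H}_t;\mathcal{Q}_{\mathrm{val}})$.
\end{itemize}
The train-side temporary frontier $\widetilde{\mathcal{C}}_{t+1}$ only filters which candidates enter $\mathcal{V}_t$. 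It never removes members of $\mathcal{C}_t$ from the final union, so it is harmless. I would also note the implicit assumption that rollout scores $r(q,\kappa)$ are available for all $q\in\mathcal{Q}_{\mathrm{val}}$ and all skills involved. I would state this explicitly.

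\textbf{Step 3 (combine).} For the capability frontier, chain Steps 2 and 1:
\[
\mathrm{OCov}(\mathcal{C}_{t+1};\mathcal{Q}_{\mathrm{val}})=\mathrm{OCov}(\mathcal{C}_t\cup\mathcal{V}_t;\mathcal{Q}_{\mathrm{val}})\ge\mathrm{OCov}(\mathcal{C}_t;\mathcal{Q}_{\mathrm{val}}).
\]
For the deploy frontier, split into cases.
\begin{itemize}
\item If the capability frontier is unchanged, or the candidate $\mathcal{B}'_t$ is rejected, then $\mathcal{B}_{t+1}=\mathcal{B}_t$ and equality holds trivially.
\item If $\mathcal{B}'_t$ is accepted, under either acceptance rule (routed gain or compactness), the same chain gives $\mathrm{OCov}(\mathcal{B}'_t;\mathcal{Q}_{\mathrm{val}})=\mathrm{OCov}(\mathcal{B}_t\cup\mathcal{H}_t;\mathcal{Q}_{\mathrm{val}})\ge\mathrm{OCov}(\mathcal{B}_t;\mathcal{Q}_{\mathrm{val}})$.
\end{itemize}
The acceptance conditions involving $\Delta_{\mathrm{route}}$, $\gamma_{\mathrm{route}}$ and $\xi_{\mathrm{drop}}$ play no role, since oracle safety comes entirely from $\Phi$.
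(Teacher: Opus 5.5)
Your proposal is correct and follows essentially the same route as the paper's proof. Both show that $\Phi$ preserves the oracle profile pointwise on $\mathcal{Q}_{\mathrm{val}}$ by induction over sequential single-skill prunings, then use the fact that a union can only raise the pointwise maximum, with an accept/reject case split for $\mathcal{B}_{t+1}$. Your separate handling of the branch where $\mathcal{C}_{t+1}=\mathcal{C}_t$, which the paper leaves implicit, and your check that both recomputations use $\mathcal{Q}_{\mathrm{val}}$ are small clarifications rather than a different argument.
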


Proposition~\ref{prop:oracle-safe-frontier} shows that both frontiers maintain non-decreasing oracle coverage on the validation set, even though the deploy frontier may lag behind capability updates due to router-aware validation.
Appendix~\ref{appen:proof} provides the proof.

%% file: section/3_Experiments.tex
\input{table/main_comparision}

\section{Experiments}
\label{sec:experiments}

% We evaluate ERSkill by addressing the following research questions:
% \begin{itemize}[itemsep=0em, topsep=0em, leftmargin=1em]
%     \item \textbf{RQ1}: How does ERSkill compare with existing methods on long-term conversational memory benchmarks?
%     \item \textbf{RQ2}: What is the contribution of each core component to the final performance?
%     \item \textbf{RQ3}: How sensitive is ERSkill to key hyperparameters?
%     \item \textbf{RQ4}: How do the retrieval skill set and the two frontiers evolve across training cycles?
%     \item \textbf{RQ5}: Is the skill evolution process stable across different runs?
% \end{itemize}

\subsection{Experimental Setup}
\label{sec:exp-setup}

\paragraph{Benchmarks.}
We evaluate ERSkill on three agent memory benchmarks: LoCoMo~\cite{maharana2024evaluating}, LongMemEval~\cite{wu2024longmemeval}, and PerLTQA~\cite{du2024perltqa}.
LoCoMo and LongMemEval contain multi-session conversational histories, while PerLTQA further evaluates memory use over heterogeneous sources beyond dialogue history.
Detailed descriptions of the datasets are provided in Appendix~\ref{appen:benchmarks}.

\paragraph{Baselines.}
We compare ERSkill
% \footnote{Code link: anonymous.4open.science/r/ERSkill\_anony-08A3}
against strong baselines including: (1) \textit{Non-evolving} baselines include A-Mem~\cite{xu2025mem}, MemoryOS~\cite{kang2025memory}, and LightMem~\cite{fang2025lightmem}, which focus on memory construction and maintenance without self-evolution from past task traces; (2) \textit{Self-evolving} baselines include Dynamic Cheatsheet~\cite{suzgun2026dynamic}, ReasoningBank~\cite{ouyang2025reasoningbank}, GEPA~\cite{agrawal2025gepa}, and MemSkill~\cite{zhang2026memskill}.
Dynamic Cheatsheet, ReasoningBank, and GEPA reflect on past task traces to distill reusable experiences or insights for future tasks, while MemSkill evolves LLM-based memory construction skills for downstream QA. Dynamic Cheatsheet, ReasoningBank, and GEPA use the standard RAG~\cite{lewis2020retrieval, karpukhin2020dense} memory storage.

\paragraph{Implementation details.}
We evaluate the performance on two LLM backbones, \texttt{Qwen3-Next-80B-A3B-Instruct}~\cite{yang2025qwen3} and \texttt{GPT-5.4-nano}~\cite{openai2026gpt54mini}, using \texttt{GPT-4o-mini} as the judge model.
We report F1, BLEU-1 (B1), and LLM-judge score (L-J, prompts in Appendix~\ref{appen:llm_judge_prompt}), where higher values represent better alignment with the ground-truth.
For ERSkill, we set the evolution's train batch sizes as 20 and 40 for LoCoMo and PerLTQA, respectively.
Queries and skills are encoded with \texttt{Qwen3-Embedding-0.6B}~\cite{zhang2025qwen3} as the $\mathrm{Enc}(\cdot)$.
For the router, both embeddings are first projected into a representation space via a Linear Layer.
The resulting representations are then concatenated and fed into a two-layer multilayer perceptron (MLP) to produce the scalar score $u_\theta(q,k)$ for each query–skill pair.
% , and a two-layer multilayer perceptron (MLP) then scores the resulting query--skill pair.
ERSkill is trained for one epoch.
For dense memory retrieval, we use Contriever~\cite{izacard2021unsupervised} as the embedder for all methods. More details are provided in Appendix~\ref{appen:imple_details}.

\subsection{Comparison Experiments}
\label{sec:comparison}

\paragraph{ERSkill achieves the strongest overall performance.}
Table~\ref{tab:main-results} reports the main comparison results, where we observe that ERSkill achieves the best overall average under both backbones. Specifically, ERSkill improves the overall average by 31.3\% across F1, BLEU-1, and L-J with \texttt{qwen3-next-80b-a3b-instruct}, and by 28.1\% with \texttt{gpt-5.4-nano} against the strongest baseline.
Compared with non-evolving methods, ERSkill shows that its memory storage is sufficiently informative, while skill-guided adaptive retrieval can extract suitable evidence.
For self-evolving methods, the gains indicate that updating summaries, prompts, or reasoning traces alone is insufficient when query-time access remains fixed; ERSkill gains from exploiting retrieval-path experience through the experience trie and the skill-router co-evolution mechanism.

\begin{wrapfigure}{r}{0.4\textwidth}
    \vspace{-1.6em}
    \centering
    \includegraphics[width=0.4\textwidth]{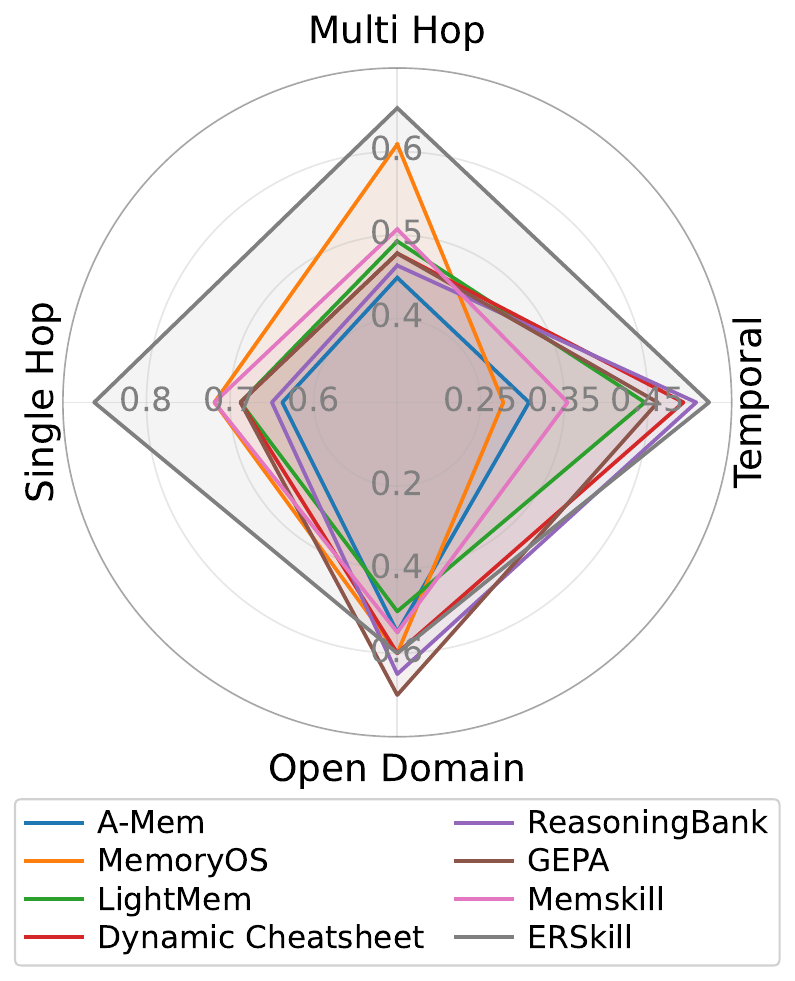}
    \vspace{-1.2em}
    \caption{Fine-grained comparison on LoCoMo categories (GPT-5.4-nano, L-J). ERSkill shows larger gains on evidence-intensive tasks such as Single Hop and Multi Hop.}
    \label{fig:radar_comparison}
    \vspace{-3.4em}
\end{wrapfigure}

As shown in Figure~\ref{fig:radar_comparison}, ERSkill's advantages are more pronounced in tasks that require accurately locating specific evidence points, such as Single Hop and Multi Hop, indicating that its effective regulation of retrieval behavior leads to improved evidence-searching capability.

\paragraph{ERSkill transfers effectively across datasets.}
ERSkill also shows strong transferability.
On LongMemEval, we directly reuse the router and retrieval skills trained on LoCoMo without additional training.
As shown in Table~\ref{tab:main-results}, in this transfer setting, ERSkill still achieves the best performance under both LLM backbones, which suggests that the learned retrieval skills capture reusable retrieval behaviors.

\begin{figure*}[t]
    \centering
    \begin{subfigure}[t]{0.49\textwidth}
        \centering
        \includegraphics[width=\textwidth]{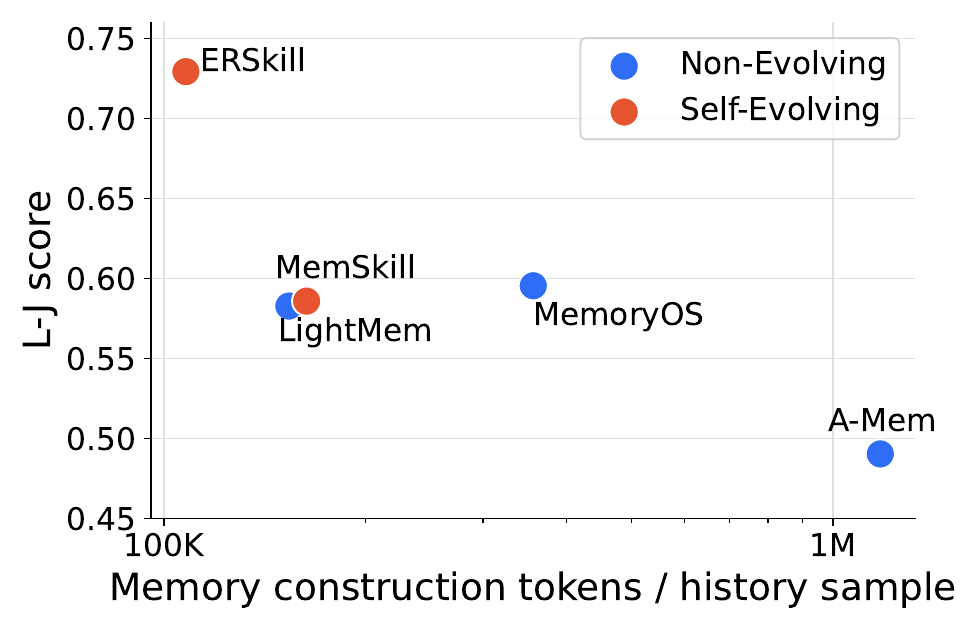}
        \caption{Memory construction cost.}
        \label{fig:cost-memory}
    \end{subfigure}
    \hfill
    \begin{subfigure}[t]{0.49\textwidth}
        \centering
        \includegraphics[width=\textwidth]{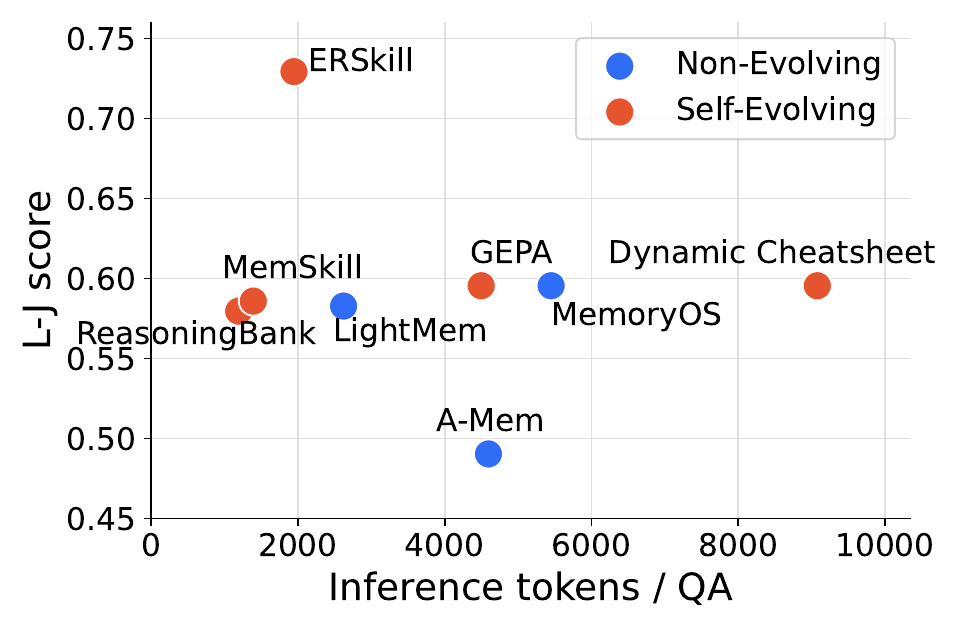}
        \caption{Inference cost.}
        \label{fig:cost-inference}
    \end{subfigure}
    \caption{
        Cost-performance comparison on LoCoMo with GPT-5.4-nano.
        Left: L-J score versus memory construction tokens per history sample. Methods without LLM-based memory construction are omitted.
        Right: L-J score versus inference tokens per QA.
        ERSkill achieves the best performance while remaining lightweight in both memory construction and inference.
    }
    \label{fig:cost-analysis}\vspace{-0.5em}
\end{figure*}

\paragraph{ERSkill achieves a leading cost-performance trade-off.}
Figure~\ref{fig:cost-analysis} compares token costs for memory construction and inference.
Dynamic Cheatsheet, ReasoningBank, and GEPA are omitted from the construction-cost plot because they use standard chunk-and-embedding memory storage rather than LLM-based construction.
ERSkill achieves a favorable cost-performance trade-off: it is the lightest among LLM-based memory construction methods, as it uses the LLM only for relation extraction among memory atoms, and remains in the lower-cost tier at inference while achieving the highest L-J score.
This suggests that ERSkill's gains come from targeted evidence construction rather than simply feeding more retrieved content.

\subsection{Ablation Study}
\label{sec:ablation}

\begin{figure*}[t]
    \centering
    \begin{subfigure}[t]{0.68\textwidth}
        \centering
        \includegraphics[width=\textwidth]{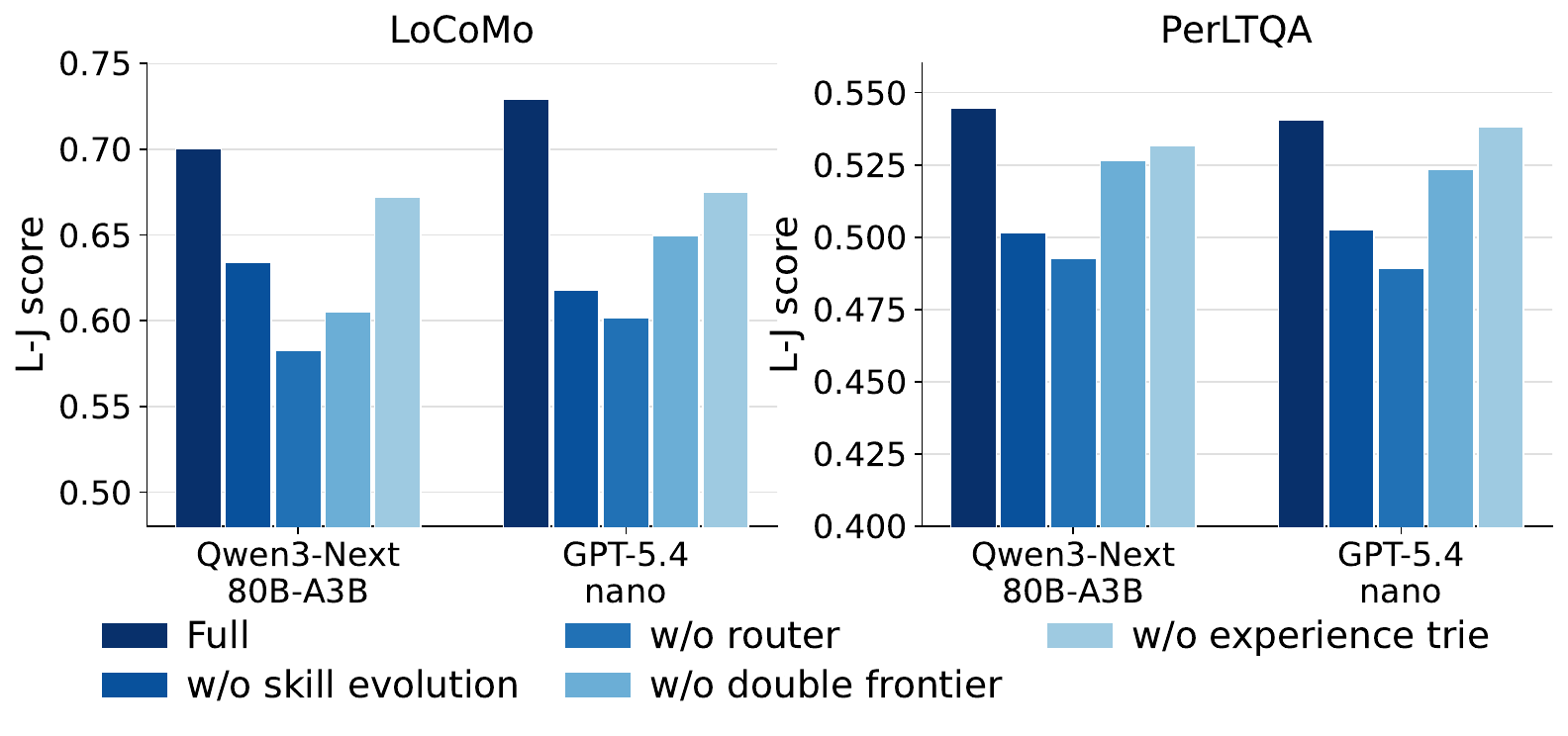}
        \caption{Ablation study.}
        \label{fig:ablation}
    \end{subfigure}
    \hfill
    \begin{subfigure}[t]{0.3\textwidth}
        \centering
        \includegraphics[width=\textwidth]{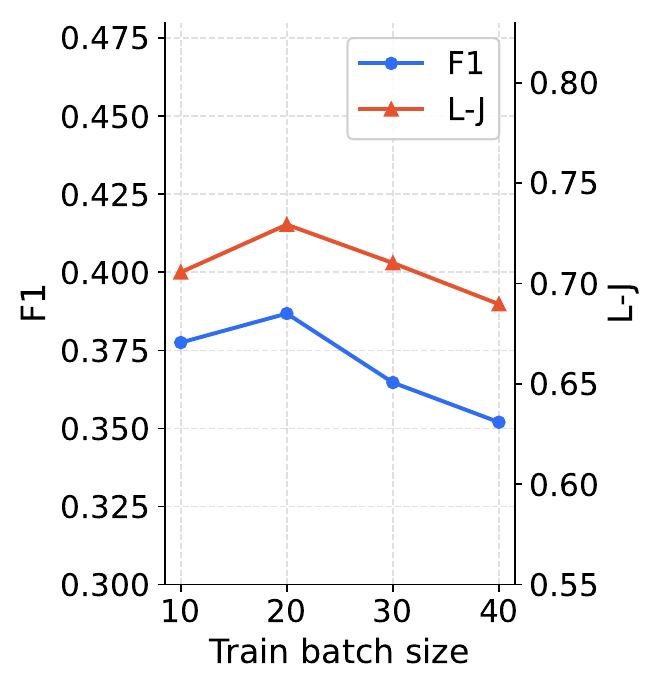}
        \caption{Hyperparameter study.}
        \label{fig:hyperparameter_batch_size}
    \end{subfigure}
    \caption{
        Ablation and hyperparameter studies.
        Left: Ablation study on LoCoMo and PerLTQA with two backbone LLMs.
        Right: Hyperparameter study on train batch size using LoCoMo with GPT-5.4-nano.
    }
    \label{fig:ablation_hyperparameter}
\end{figure*}

% We conduct an ablation study to evaluate the contribution of each core component to performance.
% \textbf{w/o skill evolution} keeps only the initial seed skills in the whole evolution stage.
% \textbf{w/o router} replaces the learned router with LLM-based skill selection. The prompt template is provided in Appendix~\ref{appen:llm_router_prompt}.
% \textbf{w/o double frontier} removes the capability/deploy frontier, and accepts each generated skill candidate.
% \textbf{w/o experience trie} generates candidates only from the current frontier, without using historical path-level search records.
% Figure~\ref{fig:ablation} shows that all ablations underperform the full ERSkill across datasets and backbones.
% The largest drops come from removing skill evolution and the router, indicating that discovering more effective retrieval skills and training a router to select them according to the query are important. Commonly used LLM-based routing is not effective enough.
% The double frontier and experience trie also contribute: the former stabilizes which evolved skills are accepted, while the latter helps avoid repeated exploration and guides candidate generation with accumulated path-level experience.

\paragraph{Ablation results verify the effectiveness of each design component.}
We ablate four core components.
\textbf{w/o skill evolution} keeps only the initial seed skills.
\textbf{w/o router} replaces the learned router with LLM-based skill selection, whose prompt is provided in Appendix~\ref{appen:llm_router_prompt}.
\textbf{w/o double frontier} removes the capability/deploy frontier and accepts all generated skill candidates.
\textbf{w/o experience trie} generates candidates only from the current frontier without historical path-level records.
As shown in Figure~\ref{fig:ablation}, all ablations underperform the full ERSkill across datasets and backbones.
The largest drops come from removing skill evolution and the router, showing the importance of both discovering effective retrieval skills and learning query-dependent skill selection.
The double frontier and experience trie also contribute: the former stabilizes skill acceptance, while the latter reduces repeated exploration and guides candidate generation with accumulated path-level experience.

\subsection{Hyperparameter Study}
\label{sec:hyperparameter}

In this section, we study the train batch size, which controls the granularity of skill evolution.
ERSkill runs one pass over the training set.
A larger train batch provides more rollout evidence for each evolution step, making frontier recomputation and router updates more stable, but it also reduces the number of evolution steps.
Conversely, a smaller batch allows more frequent skill updates, but each update is based on less reliable rollout statistics.
As shown in Figure~\ref{fig:hyperparameter_batch_size}, a batch size of 20 achieves a good balance between stable per-step evolution and sufficient update frequency.

\subsection{Case Study of Evolution}
\label{sec:case-study}

\begin{figure*}[t]
    \centering
    \includegraphics[width=\textwidth]{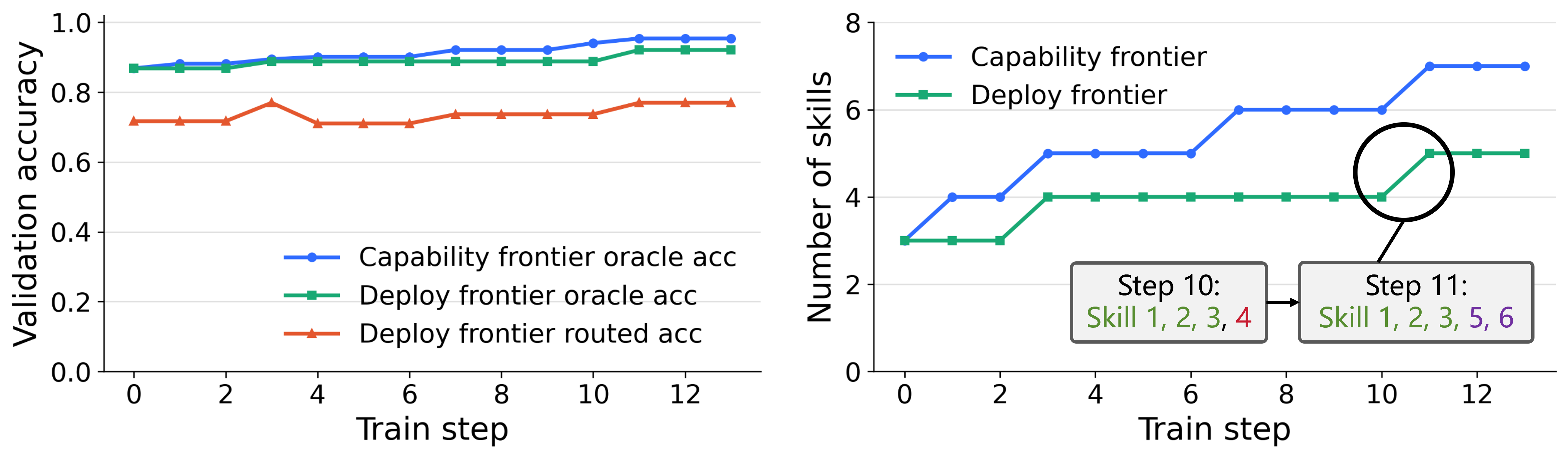}
    \caption{
        Case study of evolution.
        \textit{Left}: Capability and deploy oracle accuracy increase steadily. Routed deploy accuracy may temporarily drop but later recovers, indicating stable skill-router co-evolution.
        \textit{Right}: Frontier sizes remain controlled as stronger skills replace weaker or redundant ones.
    }
    \label{fig:case_study}
\end{figure*}

\paragraph{ERSkill expands retrieval capability while keeping evolution controlled.}
Figure~\ref{fig:case_study} visualizes a representative run on LoCoMo with GPT-5.4-nano.
The left figure shows that both capability- and deploy-frontier oracle accuracy increase steadily, indicating that frontier recomputation preserves and improves oracle-side skill coverage.
Routed deploy accuracy may temporarily drop when the deploy frontier replaces weaker or redundant skills with higher-coverage ones before the router has fully adapted.
Subsequent updates then recover the routed performance.
This suggests that ERSkill can expand retrieval ability without persistent deployment instability, while avoiding an overly conservative deploy frontier.
The right figure shows that frontier sizes remain controlled throughout evolution.
The capability frontier grows only when new skills add non-redundant oracle value, while the deploy frontier is more conservative.
ERSkill avoids accumulating all generated skills, pruning weaker or redundant ones once their utility is covered by stronger alternatives.

\subsection{Evolution Stability Analysis}
\label{sec:stability}

\begin{wraptable}{r}{0.4\linewidth}
\vspace{-1.0em}
\centering
% \scriptsize
\setlength{\tabcolsep}{3.0pt}
\begin{tabular}{lccc}
\toprule
 & \textbf{F1} & \textbf{B1} & \textbf{L-J} \\
\midrule
\textbf{Mean} & 38.20 & 33.69 & 70.38 \\
\textbf{Std.} & 1.36 & 1.82 & 1.98 \\
\textbf{CV} & 3.56\% & 5.39\% & 2.81\% \\
\bottomrule
\end{tabular}
% \vspace{-0.5em}
\caption{Stability analysis.}
\label{tab:stability}
\vspace{-1.2em}
\end{wraptable}

\paragraph{ERSkill evolves stably across runs.}
In this section, we evaluate the stability of ERSkill's skill evolution across independent runs.
Table~\ref{tab:stability} summarizes five runs on LoCoMo with \texttt{GPT-5.4-nano} and compares them against the strongest baseline.
The coefficients of variation (CV) are low across all metrics, with the largest value below 5.4\%.
It shows the robustness of ERSkill's evolution.

%% file: table/main_comparision.tex
\begin{table*}[t]
\centering
% \scriptsize
\setlength{\tabcolsep}{2.0pt}
\resizebox{\textwidth}{!}{%
\fontsize{8pt}{10pt}\selectfont % 字号，行距
\begin{tabular}{lcccccccccccc}
\toprule
\multirow{2}{*}{\textbf{Methods}} & \multicolumn{3}{c}{\textbf{LoCoMo}} & \multicolumn{3}{c}{\textbf{LongMemEval$^*$}} & \multicolumn{3}{c}{\textbf{PerLTQA}} & \multicolumn{3}{c}{\textbf{Average}} \\
\cmidrule(lr){2-4}\cmidrule(lr){5-7}\cmidrule(lr){8-10}\cmidrule(lr){11-13}
& \textbf{F1} & \textbf{B1} & \textbf{L-J} & \textbf{F1} & \textbf{B1} & \textbf{L-J} & \textbf{F1} & \textbf{B1} & \textbf{L-J} & \textbf{F1} & \textbf{B1} & \textbf{L-J} \\
\midrule

\rowcolor{qwenbar}\multicolumn{13}{c}{\textbf{Qwen3-Next-80B-A3B-Instruct}} \\
\midrule
\multicolumn{13}{l}{\emph{Non-Evolving}} \\
A-Mem & 32.96 & 37.20 & 43.63 & 15.13 & 11.10 & 48.73 & 42.37 & 37.94 & 42.65 & 30.15 & 28.75 & 45.00  \\
MemoryOS & 34.03 & 37.43 & 46.49 & 18.15 & 12.60 & 49.74 & \underline{43.70} & 36.45 & 46.79 & 31.96 & 28.83 & 47.67 \\
LightMem & 42.63 & 42.89 & \underline{64.96} & 16.48 & 11.00 & \underline{55.83} & 43.14 & 35.30 & 39.54 & 34.08 & 29.73 & \underline{53.44} \\
\multicolumn{13}{l}{\emph{Self-Evolving}} \\
Dynamic Cheatsheet & 34.96 & 33.22 & 56.05 & \underline{33.59} & \underline{34.13} & 51.26 & 35.50 & \underline{40.88} & 45.96 & \underline{34.68} & \underline{36.08} & 51.09 \\
ReasoningBank & 15.77 & 9.12 & 55.41 & 17.02 & 12.85 & 54.31 & 43.05 & 37.55 & \underline{48.44} & 25.28 & 19.84 & 52.72 \\
GEPA & 10.35 & 4.58 & 57.00 & 11.55 & 7.15 & 52.28 & 42.64 & 37.36 & 47.41 & 21.51 & 16.36 & 52.23 \\
MemSkill & \underline{43.25} & \underline{44.01} & 60.19 & 11.35 & 6.93 & 49.74 & 40.97 & 32.93 & 34.78 & 31.86 & 27.96 & 48.24   \\
\rowcolor{my_approach}
\textbf{ERSkill} & \textbf{49.96} & \textbf{50.18} & \textbf{70.06} & \textbf{46.79} & \textbf{53.31} & \textbf{59.39} & \textbf{51.89} & \textbf{44.07} & \textbf{54.45} & \textbf{49.55} & \textbf{49.19} & \textbf{61.30} \\

\midrule
\rowcolor{gptbar}\multicolumn{13}{c}{\textbf{GPT-5.4-nano}} \\
\midrule
\multicolumn{13}{l}{\emph{Non-Evolving}} \\
A-Mem & 30.40 & 29.33 & 49.04 & 17.07 & 13.77 & 42.13 & 36.06 & 32.95 & 42.65 & 27.84 & 25.35 & 44.61  \\
MemoryOS & 33.46 & 32.72 & \underline{59.55} & 15.50 & 10.57 & 45.17 & \underline{43.88} & \underline{37.27} & 47.20 & \underline{30.95} & 26.85 & 50.64  \\
LightMem & \underline{36.43} & \textbf{37.63} & 58.28 & 15.95 & 10.73 & \underline{50.25} & 39.91 & 33.26 & 37.88 & 30.76 & \underline{27.21} & 48.80  \\
\multicolumn{13}{l}{\emph{Self-Evolving}} \\
Dynamic Cheatsheet & 18.33 & 12.57 & \underline{59.55} & 21.19 & 18.31 & 43.14 & 43.44 & 36.46 & 50.72 & 27.65 & 22.45 & 51.14  \\
ReasoningBank & 17.65 & 11.88 & 57.96 & 18.88 & 13.34 & 49.74 & 43.14 & 36.16 & 48.44 & 26.56 & 20.46 & 52.05 \\
GEPA & 18.80 & 12.31 & \underline{59.55} & \underline{21.74} & \underline{18.58} & \underline{50.25} & 40.41 & 36.15 & \underline{51.34} & 26.98 & 22.35 & \underline{53.71} \\
MemSkill & 25.39 & 24.80 & 58.59 & 18.17 & 13.05 & 46.19 & 42.23 & 35.57 & 47.82 & 28.60 & 24.47 & 50.87   \\
\rowcolor{my_approach}
\textbf{ERSkill} & \textbf{38.68} & \underline{33.46} & \textbf{72.93} & \textbf{38.89} & \textbf{41.01} & \textbf{56.35} & \textbf{45.35} & \textbf{38.40} & \textbf{54.04} & \textbf{40.97} & \textbf{37.62} & \textbf{61.11}  \\

\bottomrule
\end{tabular}%
}
\caption{Main comparison results on LoCoMo, LongMemEval, and PerLTQA. We report F1, BLEU-1 (B1), and LLM-as-a-Judge (L-J) scores as percentages; higher values indicate better performance. $^*$ denotes that ERSkill is transferred from LoCoMo without training on LongMemEval.}
\label{tab:main-results}\vspace{-0.5em}
\end{table*}

%% file: section/4_Related.tex
\section{Related Work}

\subsection{Agent Memory}

Memory systems enable agents to retrieve historical information and use it as evidence for subsequent reasoning and decision-making tasks.
Recent work on agent memory studies how to externalize long interaction histories into persistent memory stores.
A typical pipeline decomposes interactions into memory atoms, compresses or consolidates salient information, stores the resulting memories in external storage, and retrieves relevant evidence when a future query arrives~\cite{chhikara2025mem0, packer2023memgpt, xu2025mem, fang2025lightmem, kang2025memory}.
Some methods focus on designing more effective memory pipelines, such as A-MEM~\cite{xu2025mem} and LightMem~\cite{fang2025lightmem}, while others improve the agent's memory management ability through training, such as Memory-R1~\cite{yan2025memory}, Mem-$\alpha$~\cite{wang2025mem}, and MemAgent~\cite{yu2025memagent}.
Despite these advances, most methods still expose memory through a predefined retrieval strategy at query time.
By contrast, we focus on adaptive memory retrieval: how an agent should access, expand, and organize memory evidence according to heterogeneous query demands.

\subsection{Self-Evolving Agents and Skill Discovery}

Recent work on self-evolving LLM agents studies how agents can improve from interaction experience by using LLMs to analyze past task traces~\cite{zhao2024expel, ouyang2025reasoningbank, agrawal2025gepa, zhang2026memskill, zheng2025skillweaver, jiang2026xskill}.
A typical pipeline is to collect trajectories from previous tasks, let the LLM identify key success or failure factors, and distill them into reusable experiences, rules, prompts, or memory items that can guide future behavior.
For example, ReasoningBank~\cite{ouyang2025reasoningbank} accumulates historical reasoning paths and asks the LLM to analyze correct and erroneous reasoning steps, turning the resulting insights into memory items for later reasoning.
MemSkill~\cite{zhang2026memskill} studies memory construction skills: it adjusts how an agent extracts memory items and improves these skills using task signals from downstream memory QA traces.
By contrast, ERSkill focuses on constructing skills for memory retrieval rather than memory construction.
It treats query-time memory access as an evolvable retrieval behavior and introduces an experience trie and a double-frontier mechanism to enable effective, stable self-evolution.

%% file: section/5_Conclusion.tex
\section{Conclusion}

We proposed \textbf{ERSkill}, a retrieval-centric framework for self-evolving, skill-guided agent memory retrieval.
ERSkill represents memory access as executable retrieval skills composed from a shared primitive library, and uses a trained router to select the skill matching each query's information demand.
To build effective and deployable skills, ERSkill co-evolves the skill set and router with an experience trie and a double-frontier mechanism, separating capability expansion from router-facing deployment.
Experiments on three agent memory benchmarks show that ERSkill outperforms strong non-evolving and self-evolving baselines across different backbone LLMs, while further analyses demonstrate cross-dataset transfer, favorable cost-performance trade-offs, and stable evolution.
These results highlight retrieval-side evolution as a promising direction for long-term LLM agents.

%% file: section/6_Appendix.tex
\appendix

\section{Details for Memory Storage}

\subsection{Structured Memory Construction}
\label{appen:structured_memory}

We describe the construction procedure of the structured memory used by ERSkill. Given an interaction history $D$, ERSkill compiles it into a retrieval-first memory store:
\begin{equation}
    M(D) = (\mathcal{A}, \mathcal{I}, \mathcal{G}),
    \label{eq:structured-memory-store}
\end{equation}
where $\mathcal{A}$ denotes a set of atom-level memory records, $\mathcal{I}$ denotes a collection of indexes that provide entry points for search primitives, and $\mathcal{G}$ denotes a collection of graphs used by expansion primitives. The construction pipeline is summarized as:
\begin{equation}
\begin{aligned}
    \text{sample}
    &\rightarrow \text{atoms}
    \rightarrow \text{atom/entity embeddings} \\
    &\rightarrow \text{similarity edges}
    \rightarrow \text{relation edges}
    \rightarrow \text{runtime indexes}.
\end{aligned}
\label{eq:memory-construction-pipeline}
\end{equation}

\paragraph{Memory atoms.}
We first convert each raw benchmark sample into a sequence of text atoms. The segmentation strategy depends on the dataset. For LoCoMo, we read the session fields from the conversation together with the corresponding session timestamps, and use turn-pairs as the atom granularity. LongMemEval is processed in a similar way, using its historical sessions, session dates, and session identifiers. For PerLTQA, we first flatten heterogeneous memory sources, including profiles, relationships, events, and dialogues, into memory records. For profiles, relationships, and events, each individual record is directly treated as one text atom. For dialogues, each session is treated as one text atom.

Each text atom is then wrapped as a memory atom. We represent each memory atom as
\begin{equation}
    a_i = (\texttt{atom\_id}, \texttt{text}, \texttt{timestamp}, \texttt{entity\_set}).
    \label{eq:memory-atom}
\end{equation}
We further enrich each atom with several access signals. First, we extract the entity set from the atom text using named entity recognition implemented by the spaCy package~\cite{Honnibal_spaCy_Industrial-strength_Natural_2020}. Second, for LoCoMo and LongMemEval, we preserve the corresponding timestamp as temporal metadata. For PerLTQA, the timestamp field is left empty when no comparable session-level timestamp is available.

\paragraph{Embedding indexes.}
After constructing atoms, we embed both atoms and entities. For atom embeddings, we collect all atom texts and encode them with the context-side retriever encoder:
\begin{equation}
    \mathbf{h}_i = \mathrm{Embed}_{\mathrm{ctx}}(a_i.\texttt{text}).
    \label{eq:atom-embedding}
\end{equation}
The resulting vectors are stored as atom embeddings and are used by dense retrieval. We also collect all unique entity strings from the atom entity sets:
\begin{equation}
    \mathcal{E} = \bigcup_{a_i \in \mathcal{A}} a_i.\texttt{entity\_set},
    \label{eq:entity-set}
\end{equation}
encode them with the same embedding interface, and store the resulting entity texts and entity embeddings. These entity embeddings support entity-centered search and graph activation.

\paragraph{Similarity graph.}
We construct a similarity graph over memory atoms using atom embeddings. For each atom, we compute cosine similarity to all other atoms, remove the self-edge, keep the top-$k$ ($k=5$ in our implementation) most similar atoms, and discard non-positive similarities. Each retained neighbor produces a directed edge:
\begin{equation}
    e_{i,j}^{\mathrm{sim}}
    =
    (a_i, a_j, \texttt{similarity}, s_{i,j}, \texttt{embedding\_topk}),
    \label{eq:similarity-edge}
\end{equation}
where $s_{i,j}$ is the cosine similarity score:
\begin{equation}
    s_{i,j}
    =
    \frac{\mathbf{h}_i^\top \mathbf{h}_j}
    {\|\mathbf{h}_i\|_2 \|\mathbf{h}_j\|_2}.
    \label{eq:cosine-similarity}
\end{equation}
This graph is used by \texttt{similarity\_expand} and also provides candidate neighbors for relation extraction.

\paragraph{Relation graph.}
We further construct typed relation edges between atoms. The relation graph contains both rule-based temporal-neighbor edges and LLM-extracted semantic relation edges. For temporal-neighbor edges, we group atoms by session, sort them by turn order, and connect adjacent atoms when their time keys overlap:
\begin{equation}
    a_i \leftrightarrow a_j
    \quad
    \text{if}
    \quad
    a_i.\texttt{time\_keys} \cap a_j.\texttt{time\_keys} \neq \varnothing
    \quad
    \text{and}
    \quad
    a_i,a_j \text{ are adjacent in the same session}.
    \label{eq:temporal-neighbor-rule}
\end{equation}
These edges are added bidirectionally with relation type \texttt{TemporalNeighbor}. This rule is applied to session-based datasets such as LoCoMo and LongMemEval.

For semantic relation edges, we start from the similarity neighbors of each source atom and keep the top candidates. For non-PerLTQA datasets, we only retain candidate neighbors that are temporally later than the source atom, so that extracted relations follow the forward temporal direction of the interaction history (bi-directional for PerLTQA). The source atom and its candidate neighbors are then given to an LLM relation extractor. The extractor is constrained to output one of the following labels:
\begin{equation}
\begin{aligned}
    \mathcal{R} = \{
    \texttt{Changed},\ \texttt{Cause},\ \texttt{Reason},\ \texttt{HinderedBy},\ \texttt{React},\ \texttt{Want},\ \texttt{none}
    \}.
\end{aligned}
\label{eq:relation-label-set}
\end{equation}
The relation set is inspired by \cite{hwang2021comet}. If the predicted label is not \texttt{none}, we add a typed relation edge:
\begin{equation}
    e_{i,j}^{\mathrm{rel}}
    =
    (a_i, a_j, r_{i,j}, \max(0.5, s_{i,j})),
    \label{eq:llm-relation-edge}
\end{equation}
where $r_{i,j} \in \mathcal{R} \setminus \{\texttt{none}\}$ is the predicted relation label and $s_{i,j}$ is the similarity score of the candidate neighbor. The extraction prompts are provided in Appendix~\ref{appen:llm_relation_extraction_prompt}.

\paragraph{Runtime indexes.}
Finally, we finalize the memory store by constructing runtime indexes for efficient primitive execution. The finalized indexes include: an atom-id-to-atom map, token and inverse-document-frequency statistics for BM25-style lexical retrieval, an entity-to-atom inverted index, an atom-to-entity map, a time-key index, a relation adjacency list, and a similarity adjacency list. These indexes make the memory directly executable by retrieval primitives. The resulting memory store is serialized and cached, so later skill execution can reuse the same structured memory without reconstructing embeddings, graphs, or indexes.

% \paragraph{Primitive interface.}
% ERSkill exposes the structured memory through a fixed primitive library. The primitive library contains three types of operators. Search primitives enter memory through indexes in $\mathcal{I}$: \texttt{dense\_search} retrieves atoms by query--atom embedding similarity, \texttt{lexical\_search} retrieves atoms by BM25-style surface-form matching, and \texttt{entity\_search} anchors retrieval through entities mentioned or implied in the query. Expansion primitives grow the current evidence state through graphs in $\mathcal{G}$: \texttt{similarity\_expand} follows similarity edges, \texttt{relation\_expand} follows typed relation edges, and \texttt{temporal\_focus\_expand} expands around temporally relevant atoms or neighboring turns. In addition, \texttt{llm\_process} is used as a processing primitive for query rewriting, evidence filtering, state normalization, and deriving control signals for subsequent expansion.
% We summarize the primitives in Table~\ref{tab:primitive-library}. All primitives are provided to the LLM via Python function calls.

\paragraph{Primitive interface.}
ERSkill exposes the structured memory through a fixed primitive library. Each primitive is implemented as a Python function call and is executed by a shared primitive executor. Given the current query, the structured memory, and the current retrieval state, a primitive returns a set of candidate atoms together with execution signals, and the executor merges or replaces these candidates into the active evidence state according to the skill program.

The primitive library contains three types of operators. Search primitives enter the global memory through indexes in $\mathcal{I}$.
\texttt{dense\_search} embeds the query and retrieves atoms by cosine similarity against the cached atom embeddings.
\texttt{lexical\_search} performs BM25-style surface-form matching using the token and inverse-document-frequency statistics built during memory finalization.
\texttt{entity\_search} first extracts entity from the query, matches them against cached entity embeddings, activates the entity--atom index, and ranks candidate atoms with a personalized PageRank procedure that combines entity seed scores with query--atom semantic priors.

Expansion primitives grow the current evidence state from the active candidate atoms through graphs in $\mathcal{G}$. \texttt{similarity\_expand} follows precomputed similarity edges from the current active atoms and adds semantically neighboring atoms. \texttt{relation\_expand} follows typed relation edges from the current active atoms, optionally constrained by a preferred relation such as \texttt{Cause}, \texttt{Reason}, \texttt{Changed}, or \texttt{TemporalNeighbor}. \texttt{temporal\_focus\_expand} retrieves atoms whose timestamps fall within an explicit time range and optionally reranks them with query--atom embedding similarity; it therefore requires the skill program or an upstream processing step to provide a valid \texttt{time\_range}.

In addition, \texttt{llm\_process} serves as a processing primitive rather than a retrieval primitive. It reads the question, the current query, and the current evidence context, then writes structured variables back to the retrieval state, such as \texttt{preferred\_relations}, \texttt{time\_range}, a rewritten \texttt{current\_query}, or a \texttt{view\_summary}. These variables can subsequently control expansion primitives, for example by selecting the relation label for \texttt{relation\_expand} or deriving the time window for \texttt{temporal\_focus\_expand}. We summarize the primitives in Table~\ref{tab:primitive-library}. All primitives are provided to the LLM via function calls.

\begin{table}[ht]
\centering
\resizebox{\columnwidth}{!}{
\setlength{\tabcolsep}{2.0pt}
\fontsize{7pt}{8pt}\selectfont
\begin{tabular}{lll}
\hline
\textbf{Type} & \textbf{Primitive} & \textbf{Function} \\
\hline
Search &
\texttt{entity\_search} &
Anchor retrieval by entities mentioned or implied in the query. \\
Search &
\texttt{lexical\_search} &
Retrieve atoms with surface-form lexical matches. \\
Search &
\texttt{dense\_search} &
Retrieve atoms by semantic similarity to the query. \\
\hline
Expand &
\texttt{temporal\_focus\_expand} &
Expand around temporally relevant atoms or neighboring turns. \\
Expand &
\texttt{similarity\_expand} &
Expand to atoms connected by semantic similarity. \\
Expand &
\texttt{relation\_expand} &
Expand along typed relations between atoms. \\
\hline
Process &
\texttt{llm\_process} &
Rewrite queries, filter evidence, or normalize the retrieval state, etc. \\
\hline
\end{tabular}
}
\caption{Primitive library in ERSkill. Search primitives locate candidate atoms, expansion primitives extend evidence, and Process primitives serve for intermediate processing.}
\label{tab:primitive-library}
\end{table}

\section{Details for Evolution}

\subsection{Algorithm of Evolution}
\label{appen:algo_evolution}

We summarize the procedure of the Skill-Router Co-Evolution in Algorithm~\ref{alg:skill_router_evolution}.

\begin{algorithm}[ht]
\caption{Skill-Router Co-Evolution in ERSkill}
\label{alg:skill_router_evolution}
\begin{algorithmic}[1]
\Require Training batches $\{\mathcal{Q}_t\}_{t=0}^{T-1}$, validation set $\mathcal{Q}_{\mathrm{val}}$, seed skill set $\mathcal{K}_{\mathrm{seed}}$, primitive library $\mathcal{P}$, initial router parameters $\theta_0$, routed-gain margin $\gamma_{\mathrm{route}}$, compactness tolerance $\xi_{\mathrm{drop}}$
\Ensure Final deploy frontier $\mathcal{B}_T$, router parameters $\theta_T$, and experience trie $\mathcal{T}_T$

\State Initialize $\mathcal{C}_0\gets\mathcal{K}_{\mathrm{seed}}$, $\mathcal{B}_0\gets\mathcal{K}_{\mathrm{seed}}$, and initialize $\mathcal{T}_0$ with the paths in $\mathcal{K}_{\mathrm{seed}}$
\State Initialize the router training window $\mathcal{W}_0\gets\emptyset$

\For{$t=0,\ldots,T-1$}
    \State Roll out each skill $\kappa\in\mathcal{C}_t$ on each query $q\in\mathcal{Q}_t$ to obtain query--skill scores, execution traces, and ability overlaps
    \State Write rollout results of $\mathcal{C}_t$ into $\mathcal{T}_t$
    
    \State Generate candidate skills $\mathcal{U}_t$ by editing paths from $\mathcal{C}_t$ and filtering out paths already recorded in $\mathcal{T}_t$
    \State Roll out each candidate $\kappa\in\mathcal{U}_t$ on $\mathcal{Q}_t$ and write its path and train-batch rollout results into $\mathcal{T}_t$
    
    \State Compute the train-side temporary frontier $\widetilde{\mathcal{C}}_{t+1}\gets\Phi(\mathcal{C}_t\cup\mathcal{U}_t;\mathcal{Q}_t)$
    \State Let $\mathcal{V}_t\gets\mathcal{U}_t\cap\widetilde{\mathcal{C}}_{t+1}$ be the candidates retained by train-side recomputation
    
    \State Roll out each retained candidate $\kappa\in\mathcal{V}_t$ on $\mathcal{Q}_{\mathrm{val}}$
    \State Update the capability frontier $\mathcal{C}_{t+1}\gets\Phi(\mathcal{C}_t\cup\mathcal{V}_t;\mathcal{Q}_{\mathrm{val}})$
    
    \State Update $\mathcal{W}_{t+1}$ with rollout instances $(q,\mathcal{K}_q,\{r(q,\kappa)\}_{\kappa\in\mathcal{K}_q})$ collected in the current step
    \State Continually update the router from $\theta_t$ to $\theta_{t+1}$ using mini-batches from $\mathcal{W}_{t+1}$
    
    \If{$\mathcal{C}_{t+1}\neq\mathcal{C}_t$}
        \State Derive the deploy-update candidate set $\mathcal{H}_t\gets\mathcal{U}_t\cap\mathcal{C}_{t+1}$
        \State Construct the candidate deploy frontier $\mathcal{B}'_t\gets\Phi(\mathcal{B}_t\cup\mathcal{H}_t;\mathcal{Q}_{\mathrm{val}})$
        \State Compute $\Delta_{\mathrm{route}}(\mathcal{B}'_t;\theta_{t+1})\gets\mathrm{Routed}(\mathcal{B}'_t,\theta_{t+1};\mathcal{Q}_{\mathrm{val}})-\mathrm{Routed}(\mathcal{B}_t,\theta_{t+1};\mathcal{Q}_{\mathrm{val}})$
        \If{$\Delta_{\mathrm{route}}(\mathcal{B}'_t;\theta_{t+1})\ge\gamma_{\mathrm{route}}$ \textbf{or} $(\Delta_{\mathrm{route}}(\mathcal{B}'_t;\theta_{t+1})\ge-\xi_{\mathrm{drop}}$ \textbf{and} $|\mathcal{B}'_t|\le|\mathcal{B}_t|)$}
            \State Accept the deploy update and set $\mathcal{B}_{t+1}\gets\mathcal{B}'_t$
        \Else
            \State Reject the deploy update and set $\mathcal{B}_{t+1}\gets\mathcal{B}_t$
        \EndIf
    \Else
        \State Set $\mathcal{B}_{t+1}\gets\mathcal{B}_t$
    \EndIf
    
    \State Write frontier status and deploy accept/reject outcomes into $\mathcal{T}_t$
    \State Set $\mathcal{T}_{t+1}\gets\mathcal{T}_t$
\EndFor

\State \Return $\mathcal{B}_T$, $\theta_T$, $\mathcal{T}_T$
\end{algorithmic}
\end{algorithm}

\subsection{Skill Candidate Generation}
\label{appen:skill_cand_gen}

We organize the skill evolution process as a three-stage agent workflow. First, we select a subset of rollouts in the train batch for skill candidate generation. For each selected trace (rollout), we invoke an analyzer to perform trace-grounded case analysis. Failed traces and successful traces are handled by two separate analyzer prompts: the failed-trace analyzer diagnoses where the current skill breaks, including the root cause, failure mode, and key nodes leading to the mismatch, while the success-trace analyzer explains why the skill matches the task and identifies the nodes that materially contribute to the successful execution. Next, the designer aggregates these case-level analyses together with the current skill catalog, frontier skills, primitive catalog, aggregate skill metrics, oracle performance, and recent evolution history. Based on this integrated diagnosis, it produces high-level evolution decisions, either keeping the skill set unchanged or proposing a new path candidate. Finally, the generator takes each approved designer proposal and materializes it into concrete skill fields, including the skill name, description, information preference, and executable program JSON, while ensuring that the generated skill remains within the proposal scope and is novel with respect to existing canonical programs.

\begin{tcolorbox}[promptbox, title={Analyzer Prompt for Failed Trace}]
\begin{Verbatim}[
  fontsize=\small,
  breaklines=true,
  breakanywhere=true,
  breaksymbolleft={},
  breaksymbolright={}
]
You are the analyzer for retrieval skill evolution.

Role:
Per-case analyzer for one failed run. Produce trace-grounded diagnosis only; do not propose skill-level actions.

Task:
Trace how evidence and state changed across the program, judge how well the current skill matches the task, judge whether the canonical program supports the declared skill, and assign exactly one diagnostic root cause.

Rules:
- Use full trace, retrieval_metrics, evidence_view, and final_state.
- Root causes:
  - skill_capability_gap: current description / information_preference / program is insufficient for the task, including cases where the needed evidence was not retrieved.
  - skill_over_broad_boundary: case appears attracted by the wrong skill; main issue is ownership or boundary clarity.
  - answer_generation_mismatch: useful evidence is present but the final answer is still weak.
- Also classify the failure mode as exactly one of:
  - retrieval_gap: needed evidence was not retrieved or the retrieval path is insufficient.
  - synthesis_gap: useful evidence is already present but extraction, reasoning, or answer generation still fails.
- Focus on where the current skill-task match is supported and where it breaks. Do not recommend skill changes.
- Use key_nodes to identify the nodes that lead to failure or mismatch with the intended behavior.
- Use overall_trace_judgment to summarize the skill's overall failure pattern.
- Use node_findings to explain each node's role, evidence or state change, and the observed mismatch, missing support, or drift.

Output Schema:
Return strict JSON with keys:
- case_index
- case_query
- case_groundtruth
- case_prediction
- case_chosen_skill
- root_cause
- failure_mode
- overall_trace_judgment
- program_support_status
- key_nodes
- node_findings
key_nodes must be a list of objects with keys:
- node_id
- primitive
- undesired_behavior
Each node_findings item must contain:
- node_id
- primitive
- role_in_program
- evidence_delta
- query_or_state_delta
- observed_problem

Case Input:
<case_payload>
\end{Verbatim}
\end{tcolorbox}

\begin{tcolorbox}[promptbox, title={Analyzer Prompt for Successful Trace}]
\begin{Verbatim}[
  fontsize=\small,
  breaklines=true,
  breakanywhere=true,
  breaksymbolleft={},
  breaksymbolright={}
]
You are the analyzer for retrieval skill evolution.

Role:
Per-case analyzer for one successful run. Produce structured success explanation only; do not invent a discrete success taxonomy.

Task:
Explain why the current skill matches the task in this case, which nodes materially contributed to success, and whether the canonical program supports the declared skill.

Rules:
- Use full trace, retrieval_metrics, evidence_view, and final_state.
- Keep the analysis narrative and structured; do not create a discrete success pattern label.
- Use key_nodes to identify the nodes that materially contributed to success.
- Use overall_trace_judgment to summarize the skill's overall success pattern.
- Focus on why the skill-task match works here; do not recommend skill changes.

Output Schema:
Return strict JSON with keys:
- case_index
- case_query
- case_groundtruth
- case_prediction
- case_chosen_skill
- overall_trace_judgment
- program_support_status
- key_nodes
- node_findings
key_nodes must be a list of objects with keys:
- node_id
- primitive
- contribution
Each node_findings item must contain:
- node_id
- primitive
- role_in_program
- evidence_delta
- query_or_state_delta

Case Input:
<case_payload>
\end{Verbatim}
\end{tcolorbox}

\begin{tcolorbox}[promptbox, title={Designer Prompt}]
\begin{Verbatim}[
  fontsize=\small,
  breaklines=true,
  breakanywhere=true,
  breaksymbolleft={},
  breaksymbolright={}
]
You are the designer for retrieval skill evolution.

Role:
Integrate trace-grounded case analyses into skill-set level diagnosis and produce up to <shadow_candidate_top_k> high-level executable proposals.

Task:
Use train-batch trace experience as the main diagnostic signal and capability-level aggregate signals as the global constraint. Combine them with the current skill catalog to form one integrated conclusion about coverage gaps and reusable successful patterns, then output high-level no_change / new_path_candidate proposals.

Analysis Views:
- Failed view: failed_trace_analyses are case-level and already contain per-case root_cause. For each failed case, analyze why all skills failed on that query.
- Success view: read success_trace_analyses from a skill-level view. Summarize what successful patterns for each skill look like, what boundary they anchor, and which node or evidence behaviors make the skill work.
- Aggregate capability view: use skill_metrics and oracle_acc to understand average skill strength, each skill's unique ownership, and the batch oracle ceiling.
- root_cause is a diagnostic signal only. Do not let any single root_cause label directly determine the proposal action.
- failure_mode is the primary execution-path signal. Treat retrieval_gap as evidence that retrieval path or graph support may be insufficient; treat synthesis_gap as evidence that useful evidence may already exist and answer extraction or processing may be the main issue.
- Proposals must come from the final integrated conclusion across failed cases, success patterns, skill definitions, canonical programs, aggregate capability signals, and recent history.
- Do not tunnel on one bad sample. Explicitly reconcile local failures with aggregate capability signals before proposing changes.
- Use no_change when the integrated conclusion does not support a new path. Use new_path_candidate when the current inventory still leaves a real capability gap and a new fixed retrieval path is justified.
- It is acceptable, and often beneficial, to recombine meaningful patterns from multiple existing skills when the integrated evidence suggests that a new path built from those patterns would better cover the missing capability.
- For retrieval_gap, it is acceptable to consider search / expand / graph rewrite / primitive swap changes when aggregate evidence supports retrieval-path redesign. For synthesis_gap, it is acceptable to consider llm_process or lighter answer-extraction changes when evidence is already present.
- Do not hard-code search or expand as the default action. Choose retrieval-path change only when the integrated evidence supports it.
- Use aggregate signals aggressively when deciding which existing patterns to borrow from first. Prefer borrowing from skills that are relatively lower-value within the current skill set when possible, and avoid disturbing relatively higher-value patterns unless the integrated evidence strongly justifies it.
- Treat avg_acc and unique_win_ratio as comparative signals within the current skill set, not absolute thresholds.
- Designer proposals must stay high-level. Do not specify exact node ids, graph layouts, sequence plans, or connection details.
- Every valid new_path_candidate represents a genuinely new fixed path skill, not a metadata-only or no-program-change edit.
- Same program under a different name is forbidden.
- Every new_path_candidate must be a modification of exactly one current capability-frontier skill.
- source_skill_name must be chosen from the current capability frontier only.
- Retired skills may appear in trie history for reference, but they cannot be chosen as source_skill_name.
- Prefer modifications that aggressively cover currently uncovered queries or create unique capability regions.
- Do not propose a mild patch whose only meaningful change is adding one llm_process node.

Output Schema:
Return strict JSON with keys:
- skill_set_diagnosis: object with keys coverage_gaps, reusable_patterns, candidate_new_skills
- proposals: list of objects with keys action, root_cause, problem_pattern, trigger_source, intended_program_novelty, rationale, expected_gain, reasoning, and only for new_path_candidate also new_skill_name and source_skill_name
Field constraints:
- action must be exactly one of: no_change, new_path_candidate
- trigger_source must be exactly: oracle_hard
- If action is new_path_candidate, new_skill_name must be a lowercase slug
- If action is new_path_candidate, source_skill_name must be a lowercase slug from the current capability frontier
- If action is no_change, do not provide new_skill_name
- Use short enum-like values for action and trigger_source, not sentences, booleans, or explanations

Inputs:
Primitive catalog:
<primitive_catalog>

Existing skills with canonical programs:
<skill_catalog>

Current capability frontier parent skills:
<frontier_skill_catalog>

Trie history summary:
<trie_history_summary>

Recent evolution history:
<recent_history>

Aggregate skill metrics:
<skill_metrics>

Batch oracle capability:
<oracle_acc>

Failed trace analyses:
<failed_trace_analyses>

Success trace analyses:
<success_trace_analyses>
\end{Verbatim}
\end{tcolorbox}

\begin{tcolorbox}[promptbox, title={Generator Prompt}]
\begin{Verbatim}[
  fontsize=\small,
  breaklines=true,
  breakanywhere=true,
  breaksymbolleft={},
  breaksymbolright={}
]
You are the generator for retrieval skill evolution.

Role:
Materialize one approved proposal into structured skill fields. Do not output markdown and do not re-judge the proposal.

Task:
Treat the selected proposal as the final decision. Generate name, description, information_preference, and program_json that stay within that proposal's scope, are derived from source_skill_name as the parent skill, and keep the resulting program novel relative to existing canonical programs.

Program Rules:
- Use only valid existing primitives; do not invent unsupported arguments.
- Do not reinterpret the proposal. Do not change action, ownership, or the fact that this is a new-path candidate.
- Keep action and new_skill_name consistent with the selected proposal.
- source_skill_name is the parent skill. Treat its canonical program as the base program to modify.
- Do not broaden the skill description beyond the proposal.
- Ensure the canonical program supports the claimed skill behavior.
- Prefer genuine graph edits when needed: prepend a node, insert a node anywhere, remove a node, swap a primitive, or rewrite control flow.
- Do not default to changing only an llm_process prompt if the real issue is retrieval graph design.
- When using expand-style primitives, usually place an upstream llm_process node to determine the control inputs before expansion. For example: choose relation labels before relation_expand, derive time_range before temporal_focus_expand, and identify which memory or evidence subset should be expanded before similarity_expand.
- temporal_focus_expand requires an explicit time_range input and is usually not a first-hop primitive by itself; prefer an upstream retrieval step plus llm_process to derive time_range before calling it.
- Every valid output must contain a genuinely new fixed path program, not a metadata-only or no-program-change rewrite.
- Do not produce a proposal whose only meaningful structural change is adding one llm_process node.

Few-shot Examples:
Example 1
Selected proposal:
{
  "action": "new_path_candidate",
  "new_skill_name": "semantic-surface-clue",
  "reasoning": "The current skill retrieves semantically related evidence but often misses exact answer-bearing wording. Keep the same owner but tighten it with a second retrieval step."
}
Good generator behavior:
- keep the proposal high-level intent
- materialize the graph as a real retrieval-path edit such as dense_search followed by lexical_search
- do not replace the change with only an llm_process prompt rewrite

Example 2
Selected proposal:
{
  "action": "new_path_candidate",
  "new_skill_name": "entity-relational-chain",
  "reasoning": "The current skill finds the right entity, but the evidence chain is too shallow and does not expand along the relation that actually links to the answer. Keep the same owner and extend the retrieval path."
}
Good generator behavior:
- keep the skill identity
- preserve the retrieval owner
- materialize the graph as an entity_search followed by relation_expand, or entity_search followed by relation_expand and lexical_search when the relation expansion still needs surface filtering
- do not replace the change with only an answer-side llm_process step

Example 3
Selected proposal:
{
  "action": "new_path_candidate",
  "new_skill_name": "controlled-temporal-expansion",
  "reasoning": "The skill needs a controlled expansion, but the expansion inputs are not directly available from the raw question. Add structure that first determines the control signal and then expands."
}
Good generator behavior:
- keep the proposal high-level and turn it into a controlled graph edit
- insert an llm_process only to derive control inputs such as relation labels or time_range
- then use that control output to drive relation_expand or temporal_focus_expand
- do not stop at llm_process as the final fix when the proposal requires controlled expansion

Serialization Rules:
- program_json must be a JSON object, not a string.
- It must serialize with json.dumps and parse with json.loads.
- Any prompt_template must be a single JSON string value.
- Use escaped JSON-safe strings; do not embed unsafe nested JSON examples.
- Prefer prose examples over literal embedded JSON when possible.

Required Output:
Return strict JSON with keys:
- action
- source_skill_name
- new_skill_name
- description
- information_preference
- program_json
- reasoning

Inputs:
Selected proposal:
<proposal>

Primitive catalog:
<primitive_catalog>

Existing skills:
<skill_catalog>

Current skill markdown:
<current_markdown>
\end{Verbatim}
\end{tcolorbox}

\section{Proof}
\label{appen:proof}

\begin{proposition}[Oracle-safe two-level frontier update. Restatement of Proposition~\ref{prop:oracle-safe-frontier}]
\label{appen:prop:oracle-safe-frontier}
Denote the oracle coverage as $\mathrm{OCov}(\mathcal{K};\mathcal{Q})=\frac{1}{|\mathcal{Q}|}\sum_{q\in\mathcal{Q}}g_{\mathcal{K}}(q)$.
For every evolution step $t$, ERSkill satisfies $\mathrm{OCov}(\mathcal{C}_{t+1};\mathcal{Q}_{\mathrm{val}})\geq \mathrm{OCov}(\mathcal{C}_t;\mathcal{Q}_{\mathrm{val}})$ and $\mathrm{OCov}(\mathcal{B}_{t+1};\mathcal{Q}_{\mathrm{val}})\geq \mathrm{OCov}(\mathcal{B}_t;\mathcal{Q}_{\mathrm{val}})$.
\end{proposition}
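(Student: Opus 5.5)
The plan is to isolate one structural property of the recomputation operator $\Phi$ and derive both inequalities from it. Combined with monotonicity of the oracle profile under set inclusion, this single property should handle both frontiers uniformly.

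\textbf{Step 1: $\Phi$ preserves the oracle profile.} For any finite skill set $\mathcal{K}$ and query set $\mathcal{Q}$ on which rollout scores are available, I will show
\[
g_{\Phi(\mathcal{K};\mathcal{Q})}(q)=g_{\mathcal{K}}(q)\quad\text{for all } q\in\mathcal{Q}.
\]
Since $\Phi(\mathcal{K};\mathcal{Q})\subseteq\mathcal{K}$, the inequality $\le$ is immediate. For $\ge$, I argue by induction over the sequence of removals performed by $\Phi$ in its utility-sorted order. Each removal is executed only when it does not decrease the current profile at any $q\in\mathcal{Q}$. Since removal can only lower a maximum, the profile stays exactly equal at every step, and hence at the end.

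The one subtlety is that the condition must be checked against the current, already-pruned set rather than the original $\mathcal{K}$. I will read the operator's definition this way, as the text's ``removes a skill only when doing so does not decrease $g_{\mathcal{K}}(q)$'' indicates. I also need $\Phi$ never to empty a nonempty set, which follows from the same condition, since $g_\emptyset$ would be undefined or $-\infty$.

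\textbf{Step 2: monotonicity under inclusion.} If $\mathcal{K}\subseteq\mathcal{K}'$, then $g_{\mathcal{K}}(q)\le g_{\mathcal{K}'}(q)$ pointwise, because the maximum is taken over a larger set. Averaging over $\mathcal{Q}$ gives $\mathrm{OCov}(\mathcal{K};\mathcal{Q})\le\mathrm{OCov}(\mathcal{K}';\mathcal{Q})$.

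\textbf{Step 3: capability frontier.} We have $\mathcal{C}_{t+1}=\Phi(\mathcal{C}_t\cup\mathcal{V}_t;\mathcal{Q}_{\mathrm{val}})$. By Step 1 its profile on $\mathcal{Q}_{\mathrm{val}}$ equals that of $\mathcal{C}_t\cup\mathcal{V}_t$. By Step 2 that profile dominates the profile of $\mathcal{C}_t$, so averaging gives the first inequality. The train-side filter that produces $\mathcal{V}_t$ is irrelevant here: whatever $\mathcal{V}_t$ is, $\mathcal{C}_t$ is always contained in the union.

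\textbf{Step 4: deploy frontier.} There are two cases.
\begin{itemize}
\item If $\mathcal{C}_{t+1}=\mathcal{C}_t$, or if the gate rejects, then $\mathcal{B}_{t+1}=\mathcal{B}_t$ and equality holds.
\item Otherwise $\mathcal{B}_{t+1}=\Phi(\mathcal{B}_t\cup\mathcal{H}_t;\mathcal{Q}_{\mathrm{val}})$, and Steps 1--2 apply exactly as in Step 3.
\end{itemize}
The routed-gain gate and $\xi_{\mathrm{drop}}$ only decide between these two cases, so they play no role in the bound. This is exactly why the oracle guarantee survives even though routed performance may drop.

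\textbf{Main obstacle.} The only nontrivial point is Step 1, which requires stating $\Phi$'s removal rule precisely enough to be sequentially profile-preserving. A second requirement is that scores $r(q,\kappa)$ on $\mathcal{Q}_{\mathrm{val}}$ exist for every skill in the relevant unions. This holds because skills in $\mathcal{C}_t$ and $\mathcal{B}_t$ were validated when they were admitted, and because $\mathcal{V}_t$ and $\mathcal{H}_t\subseteq\mathcal{C}_{t+1}$ are rolled out on $\mathcal{Q}_{\mathrm{val}}$. I would state this as a standing assumption and check it against Algorithm~\ref{alg:skill_router_evolution}, noting that the seed skills must also be validated at initialization.
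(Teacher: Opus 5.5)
Your proposal is correct and follows the paper's proof essentially step for step. Like the paper, you show that each pruning step of $\Phi$, checked against the current active set, leaves $g$ unchanged on $\mathcal{Q}_{\mathrm{val}}$, then combine $\mathcal{C}_t\subseteq\mathcal{C}_t\cup\mathcal{V}_t$ and $\mathcal{B}_t\subseteq\mathcal{B}_t\cup\mathcal{H}_t$ with the accept/reject case split; your explicit handling of the branch $\mathcal{C}_{t+1}=\mathcal{C}_t$ and of the availability of validation scores only makes explicit what the paper leaves implicit.
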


\begin{proof}
Let $r(q,\kappa)\in[0,1]$ denote the validation score obtained by skill $\kappa$ on query $q$.
For any skill set $\mathcal{K}$, define its oracle score profile on $\mathcal{Q}_{\mathrm{val}}$ as
\begin{equation}
g_{\mathcal{K}}(q)
=
\max_{\kappa\in\mathcal{K}} r(q,\kappa),
\quad q\in\mathcal{Q}_{\mathrm{val}}.
\label{eq:oracle-score-profile}
\end{equation}
The oracle coverage score is the average oracle score over validation queries:
\begin{equation}
\mathrm{OCov}(\mathcal{K};\mathcal{Q}_{\mathrm{val}})
=
\frac{1}{|\mathcal{Q}_{\mathrm{val}}|}
\sum_{q\in\mathcal{Q}_{\mathrm{val}}}
g_{\mathcal{K}}(q).
\label{eq:ocov-score-definition}
\end{equation}

We first show that $\Phi$ preserves the oracle score profile on the batch used for recomputation.
Consider one pruning step with active set $\mathcal{K}_{\mathrm{act}}$.
A skill $\kappa$ is removed only if removing it does not reduce the best attainable score on any validation query:
\begin{equation}
\max_{\kappa'\in\mathcal{K}_{\mathrm{act}}\setminus\{\kappa\}}
r(q,\kappa')
=
\max_{\kappa'\in\mathcal{K}_{\mathrm{act}}}
r(q,\kappa')
\quad
\text{for all } q\in\mathcal{Q}_{\mathrm{val}}.
\label{eq:no-unique-score-contribution}
\end{equation}
Therefore,
\begin{equation}
g_{\mathcal{K}_{\mathrm{act}}\setminus\{\kappa\}}(q)
=
g_{\mathcal{K}_{\mathrm{act}}}(q)
\quad
\text{for all } q\in\mathcal{Q}_{\mathrm{val}}.
\label{eq:single-prune-preserve-score-profile}
\end{equation}
Applying the same argument to all pruning steps in $\Phi$ gives
\begin{equation}
g_{\Phi(\mathcal{K};\mathcal{Q}_{\mathrm{val}})}(q)
=
g_{\mathcal{K}}(q)
\quad
\text{for all } q\in\mathcal{Q}_{\mathrm{val}}.
\label{eq:phi-preserve-score-profile}
\end{equation}
Averaging over $\mathcal{Q}_{\mathrm{val}}$ yields
\begin{equation}
\mathrm{OCov}(\Phi(\mathcal{K};\mathcal{Q}_{\mathrm{val}});\mathcal{Q}_{\mathrm{val}})
=
\mathrm{OCov}(\mathcal{K};\mathcal{Q}_{\mathrm{val}}).
\label{eq:phi-preserve-ocov-score}
\end{equation}

For the capability frontier, let $\mathcal{K}^{\mathrm{cap}}_{t+1}=\mathcal{C}_t\cup\mathcal{V}_t$ and $\mathcal{C}_{t+1}=\Phi(\mathcal{K}^{\mathrm{cap}}_{t+1};\mathcal{Q}_{\mathrm{val}})$.
By Equation~\ref{eq:phi-preserve-ocov-score},
\begin{equation}
\mathrm{OCov}(\mathcal{C}_{t+1};\mathcal{Q}_{\mathrm{val}})
=
\mathrm{OCov}(\mathcal{C}_t\cup\mathcal{V}_t;\mathcal{Q}_{\mathrm{val}})
\geq
\mathrm{OCov}(\mathcal{C}_t;\mathcal{Q}_{\mathrm{val}}).
\label{eq:capability-ocov-monotone-score}
\end{equation}

For the deploy frontier, let $\mathcal{H}_t=\mathcal{U}_t\cap\mathcal{C}_{t+1}$ be the candidate set used to update the deploy frontier, and let $\mathcal{B}'_t=\Phi(\mathcal{B}_t\cup\mathcal{H}_t;\mathcal{Q}_{\mathrm{val}})$ be the candidate deploy frontier.
If the deploy update is accepted, then $\mathcal{B}_{t+1}=\mathcal{B}'_t$.
By Equation~\ref{eq:phi-preserve-score-profile},
\begin{equation}
g_{\mathcal{B}_{t+1}}(q)
=
g_{\mathcal{B}'_t}(q)
=
g_{\mathcal{B}_t\cup\mathcal{H}_t}(q)
\geq
g_{\mathcal{B}_t}(q)
\quad
\text{for all } q\in\mathcal{Q}_{\mathrm{val}}.
\label{eq:deploy-accepted-preserve-score-profile}
\end{equation}
Averaging over $\mathcal{Q}_{\mathrm{val}}$ gives
\begin{equation}
\mathrm{OCov}(\mathcal{B}_{t+1};\mathcal{Q}_{\mathrm{val}})
\geq
\mathrm{OCov}(\mathcal{B}_t;\mathcal{Q}_{\mathrm{val}}).
\label{eq:deploy-accepted-ocov-monotone-score}
\end{equation}

If the deploy update is rejected, ERSkill keeps the previous deploy frontier, i.e., $\mathcal{B}_{t+1}=\mathcal{B}_t$.
Thus,
\begin{equation}
\mathrm{OCov}(\mathcal{B}_{t+1};\mathcal{Q}_{\mathrm{val}})
=
\mathrm{OCov}(\mathcal{B}_t;\mathcal{Q}_{\mathrm{val}}).
\label{eq:deploy-rejected-ocov-unchanged}
\end{equation}
Combining Equations~\ref{eq:deploy-accepted-ocov-monotone-score} and~\ref{eq:deploy-rejected-ocov-unchanged}, we obtain
\begin{equation}
\mathrm{OCov}(\mathcal{B}_{t+1};\mathcal{Q}_{\mathrm{val}})
\geq
\mathrm{OCov}(\mathcal{B}_t;\mathcal{Q}_{\mathrm{val}}).
\label{eq:deploy-ocov-monotone-score}
\end{equation}

Combining Equations~\ref{eq:capability-ocov-monotone-score} and~\ref{eq:deploy-ocov-monotone-score} completes the proof.
\end{proof}

\section{More Implementation Details}
\label{appen:imple_details}

\subsection{Benchmarks}
\label{appen:benchmarks}
\paragraph{LoCoMo~\cite{maharana2024evaluating}} LoCoMo contains multi-session conversational histories with four question types: multi-hop, temporal, open-domain, and single-hop questions. Following prior work~\cite{fang2025lightmem, kang2025memory}, we exclude adversarial questions. Our splits contain 233 training examples, 152 for validation, and 314 for testing.

\paragraph{LongMemEval~\cite{wu2024longmemeval}} LongMemEval is a benchmark for evaluating long-term interactive memory in chat assistants. Each instance consists of timestamped user-assistant history sessions, a user question, the question time, and a reference answer or rubric. We use the LongMemEval-S part for evaluation. Our splits contain 205 training examples, 98 for validation, and 197 for testing.

\paragraph{PerLTQA~\cite{du2024perltqa}} PerLTQA is a personal long-term memory QA dataset designed to evaluate how models use heterogeneous memory sources in conversation. It covers both semantic memory, including world knowledge, profiles, and social relationships, and episodic memory, including events and dialogues. Our splits contain 439 training examples, 272 for validation, and 483 for testing.

\subsection{More Details}
\label{appen:other_details}
We use \texttt{gpt-4o-mini} as the LLM judge.
All experiments are conducted on a server with four RTX PRO6000 Blackwell GPUs.
The routed-gain margin $\gamma_{\mathrm{route}}$ is set to 0.00, 0.00, and 0.02 for LoCoMo, LongMemEval, and PerLTQA, respectively.
The compactness tolerance $\xi_{\mathrm{drop}}$ is set to 0.15, 0.15, and 0.05 for LoCoMo, LongMemEval, and PerLTQA, respectively.
In Appendix~\ref{appen:rollout_cache}, we also describe how to reduce the training cost by treating rollout results as reusable records.

\subsection{Rollout Reuse and Caching}
\label{appen:rollout_cache}

Skill evolution can be expensive if the same query--skill pair is repeatedly executed during oracle evaluation, router training, frontier recomputation, and deploy validation.
In our implementation, we reduce this cost by treating rollout results as reusable records.
The key principle is that retrieval and answer-generation rollouts are only executed when new information is introduced; subsequent training and selection steps operate on cached query--skill results whenever possible.

\paragraph{Batch-level oracle reuse.}
For each training batch $\mathcal{Q}_t$, ERSkill first evaluates every skill in the current capability frontier $\mathcal{C}_t$ on every query.
This produces a batch-level oracle table, where each entry stores the prediction, execution trace, evidence state, and evaluation scores for a query--skill pair.
The same table is then reused to identify oracle-best skills, construct router supervision, record training traces, compute frontier statistics, and collect cost information.
Thus, once a skill has been evaluated on a query in the batch, later components read its stored result rather than rerunning the skill.

\paragraph{Router replay without rerollout.}
After the router is updated, routed training or validation performance is computed by replaying router decisions over existing oracle tables.
Given a query and a candidate skill set, the router selects a skill, and ERSkill retrieves the selected skill's cached rollout payload from the corresponding oracle table.
This avoids an additional rollout for the router-selected skill.
As a result, routed performance estimation only requires lightweight router inference plus table lookup, rather than another retrieval-and-generation execution.

\paragraph{Incremental candidate evaluation.}
When a new skill candidate is generated, ERSkill does not re-evaluate all existing skills.
Instead, it performs a single-skill rollout for the candidate on the relevant training or validation queries, and then merges the candidate results into the existing oracle table.
Frontier recomputation, candidate filtering, and deploy-frontier construction are then performed as offline score and set operations over the merged table.
This turns many repeated evaluations into data reuse over previously collected query--skill scores.

\paragraph{Validation oracle maintenance.}
ERSkill maintains a validation oracle table throughout evolution.
At initialization, the seed skills are evaluated on the validation set.
When new candidates survive train-side recomputation and require validation, only these new candidates are rolled out on the validation queries.
Their results are merged into the maintained validation oracle table, while previous validation rollouts are reused.
This is especially useful for deploy-frontier updates, which repeatedly compare routed performance under different candidate deploy sets.

\paragraph{Content-hash cache for skill evaluation.}
For evaluation routines that score each skill independently, ERSkill uses a skill-content cache.
Each cached skill payload is associated with a hash of the skill markdown content.
A cached rollout is reused only when the current skill content hash matches the cached hash; otherwise, the skill is rerun.
This avoids redundant rollouts when a skill is unchanged, while preventing incorrect reuse when the skill name remains the same but its content has changed.

\paragraph{Memory and router-side caching.}
ERSkill also caches computations that are shared across rollouts.
The structured memory store is built once and loaded from cache during skill execution, avoiding repeated atomization, entity extraction, embedding, relation extraction, and graph construction.
For the router, skill markdown embeddings are cached and refreshed only when new skills appear or existing skill content changes.
The router training window also stores compact supervision records from historical oracle rollouts, allowing later router updates to reuse previous supervision without re-executing the underlying skills.

Overall, these implementation strategies ensure that expensive rollouts are performed mainly for newly introduced query--skill pairs.
Existing frontier skills, validation results, router supervision, and memory representations are reused through oracle tables, incremental merging, replay-based routed evaluation, and content-aware caches.

\subsection{Prompt Template}
\label{appen:prompt_template}

\subsubsection{LLM-based Relation Extraction}
\label{appen:llm_relation_extraction_prompt}

For LLM-based relation extraction, ERSkill asks the LLM to assign a typed relation from a source atom to each candidate neighbor atom.
For LoCoMo and LongMemEval, we additionally include the temporal hint ``Keep in mind that [Source Atom] happened before every [Neighbor Atom] listed here.''; for PerLTQA, this hint is omitted because the memory sources do not always share a comparable temporal order.

\begin{tcolorbox}[promptbox, title={LLM-based Relation Extraction Prompt}]
\begin{Verbatim}[
  fontsize=\small,
  breaklines=true,
  breakanywhere=true,
  breaksymbolleft={},
  breaksymbolright={}
]
Your task is to find the relation from [Source Atom] to each [Neighbor Atom].
<temporal_hint>

For each neighbor, identify whether one of the following relations clearly holds:
1. Changed: when events in [Source Atom] changed to events in [Neighbor Atom]
2. Cause: when events in [Source Atom] caused events in [Neighbor Atom]
3. Reason: when events in [Source Atom] are due to events in [Neighbor Atom]
4. HinderedBy: when events in [Neighbor Atom] can be hindered by events in [Source Atom], and vice versa
5. React: when, as a result of events in [Source Atom], the subject feels as mentioned in [Neighbor Atom]
6. Want: when, as a result of events in [Source Atom], the subject wants events in [Neighbor Atom] to happen

If a neighbor does not clearly belong to any of these relations, output "none".
Choose a relation only if there is clear evidence matching the definition.
Do not make excessive inferences beyond the given atoms.
Pay attention to who the subject is.
Do not confuse the roles of [Source Atom] and [Neighbor Atom].

Source memory:
<source_atom_text>

Neighbor memories:
<neighbor_atom_json_list>

Return JSON only in this format:
{"relations":[{"atom_id":"<neighbor atom id>","label":"Changed|Cause|Reason|HinderedBy|React|Want|none"}]}
\end{Verbatim}
\end{tcolorbox}

\subsubsection{LLM-as-a-Judge Prompt}
\label{appen:llm_judge_prompt}
We summarize the LLM-as-a-judge prompts used for each benchmark in this section.

\begin{tcolorbox}[promptbox, title={LoCoMo Judge Prompt}]
\begin{Verbatim}[
  fontsize=\small,
  breaklines=true,
  breakanywhere=true,
  breaksymbolleft={},
  breaksymbolright={}
]
Your task is to judge whether a generated answer to a question is correct or wrong. You will be given the following data:
    (1) a question (posed by one user to another user), 
    (2) a ’gold’ (ground truth) answer, 
    (3) a generated answer
which you will score as correct or wrong.

The point of the question is to ask about something one user should know about the other user based on their prior conversations.
The gold answer will usually be a concise and short answer that includes the referenced topic, for example:
Question: Do you remember what I got the last time I went to Hawaii?
Gold answer: A shell necklace
The generated answer might be much longer, but you should be generous with your grading - as long as it touches on the same topic as the gold answer, it should be counted as CORRECT. 

For time related questions, the gold answer will be a specific date, month, year, etc. The generated answer might be much longer or use relative time references (like "last Tuesday" or "next month"), but you should be generous with your grading - as long as it refers to the same date or time period as the gold answer, it should be counted as CORRECT. Even if the format differs (e.g., "May 7th" vs "7 May"), consider it CORRECT if it's the same date.

Now it's time for the real question:
Question: <question>
Gold answer: <gold_answer>
Generated answer: <generated_answer>

First, provide a short (one sentence) explanation of your reasoning, then finish with CORRECT or WRONG. 
Do NOT include both CORRECT and WRONG in your response, or it will break the evaluation script.

Just return the score in json format with the key as "score".
Use {{"score": 1}} for CORRECT and {{"score": 0}} for WRONG.
\end{Verbatim}
\end{tcolorbox}

\begin{tcolorbox}[promptbox, title={LongMemEval Judge Prompt for Single-Session User / Single-Session Assistant / Multi-Session}]
\begin{Verbatim}[
  fontsize=\small,
  breaklines=true,
  breakanywhere=true,
  breaksymbolleft={},
  breaksymbolright={}
]
I will give you a question, a correct answer, and a response from a model. Please judge whether the response contains the correct answer. If the response is equivalent to the correct answer or contains all the intermediate steps to get the correct answer, judge it as correct. If the response only contains a subset of the information required by the answer, judge it as incorrect. Return ONLY valid JSON with key "score": return {"score": 1} if the model response is correct; otherwise return {"score": 0}.

Question: <question>

Correct Answer: <correct_answer>

Model Response: <model_response>
\end{Verbatim}
\end{tcolorbox}

\begin{tcolorbox}[promptbox, title={LongMemEval Judge Prompt for Temporal Reasoning}]
\begin{Verbatim}[
  fontsize=\small,
  breaklines=true,
  breakanywhere=true,
  breaksymbolleft={},
  breaksymbolright={}
]
I will give you a question, a correct answer, and a response from a model. Please judge whether the response contains the correct answer. If the response is equivalent to the correct answer or contains all the intermediate steps to get the correct answer, judge it as correct. If the response only contains a subset of the information required by the answer, judge it as incorrect. Do not penalize off-by-one errors for the number of days, weeks, or months. For example, predicting 19 days when the answer is 18 days is still correct. Return ONLY valid JSON with key "score": return {"score": 1} if the model response is correct; otherwise return {"score": 0}.

Question: <question>

Correct Answer: <correct_answer>

Model Response: <model_response>
\end{Verbatim}
\end{tcolorbox}

\begin{tcolorbox}[promptbox, title={LongMemEval Judge Prompt for Knowledge Update}]
\begin{Verbatim}[
  fontsize=\small,
  breaklines=true,
  breakanywhere=true,
  breaksymbolleft={},
  breaksymbolright={}
]
I will give you a question, a correct answer, and a response from a model. Please judge whether the response contains the correct answer. If the response contains previous information along with an updated answer, the response is correct as long as the updated answer is the required answer. Return ONLY valid JSON with key "score": return {"score": 1} if the model response is correct; otherwise return {"score": 0}.

Question: <question>

Correct Answer: <correct_answer>

Model Response: <model_response>
\end{Verbatim}
\end{tcolorbox}

\begin{tcolorbox}[promptbox, title={LongMemEval Judge Prompt for Single-Session Preference}]
\begin{Verbatim}[
  fontsize=\small,
  breaklines=true,
  breakanywhere=true,
  breaksymbolleft={},
  breaksymbolright={}
]
I will give you a question, a rubric for the desired personalized response, and a response from a model. Please judge whether the response satisfies the desired response. The model does not need to reflect all the points in the rubric. The response is correct as long as it recalls and utilizes the user's personal information correctly. Return ONLY valid JSON with key "score": return {"score": 1} if the model response is correct; otherwise return {"score": 0}.

Question: <question>

Rubric: <rubric>

Model Response: <model_response>
\end{Verbatim}
\end{tcolorbox}

\begin{tcolorbox}[promptbox, title={PerLTQA Judge Prompt}]
\begin{Verbatim}[
  fontsize=\small,
  breaklines=true,
  breakanywhere=true,
  breaksymbolleft={},
  breaksymbolright={}
]
I will give you a question about a character's personal long-term memory, a correct answer, and a response from a model. Judge whether the model response correctly recalls and states the key facts from the character memory. If the response is semantically equivalent to the correct answer, judge it as correct even if the wording differs. If the question asks for multiple key facts and the response misses an important part, judge it as incorrect. If the response adds an unsupported or contradictory fact that changes the meaning of the answer, judge it as incorrect. Focus on factual correctness of the recalled profile, relationship, event, or dialogue memory, not stylistic differences. Return ONLY valid JSON with key "score": return {"score": 1} if the response is correct; otherwise return {"score": 0}.

Question: <question>

Correct Answer: <correct_answer>

Model Response: <model_response>
\end{Verbatim}
\end{tcolorbox}

\subsubsection{LLM-based Skill Router}
\label{appen:llm_router_prompt}

\begin{tcolorbox}[promptbox, title={LLM-based Skill Router Prompt}]
\begin{Verbatim}[
  fontsize=\small,
  breaklines=true,
  breakanywhere=true,
  breaksymbolleft={},
  breaksymbolright={}
]
Choose the single best retrieval skill for the question.
Choose only from the provided skills.
Available retrieval skills:
<skill_candidates>

Question: <question>

Return only one skill name from this set:
<allowed_skill_names>
\end{Verbatim}
\end{tcolorbox}

\section{Examples}

\subsection{Example of Experience Trie}
\label{appen:experience_trie_example}

Figure~\ref{fig:trie_sample} shows an example of experience trie.

\begin{figure*}[t]
    \centering
    \includegraphics[width=\textwidth]{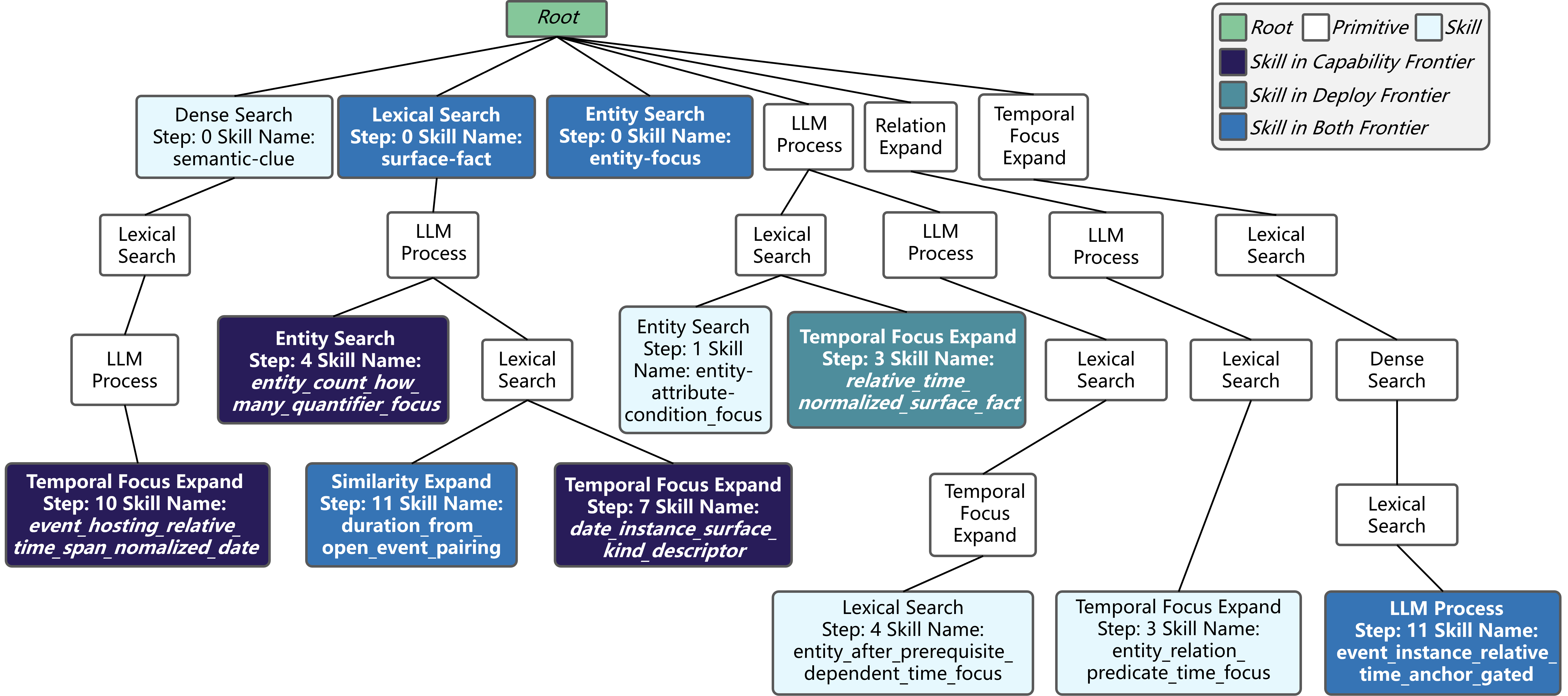}
    \caption{
        The experience trie visualizes explored primitive compositions. Each node denotes a primitive, and colors distinguish retained capability/deploy skills from explored but displaced ones. For proposed skills, ``Step'' indicates the proposal step, and ``Skill Name'' gives the assigned name.
    }
    \label{fig:trie_sample}
\end{figure*}

\section{Limitations and Future Work}
\label{appen:limitations_and_future_work}
While ERSkill introduces a new paradigm for agent memory and achieves strong performance on agent memory benchmarks, several directions remain for future work.
First, skill evolution relies on rollout-based evaluation and LLM-as-a-Judge supervision, which adds training-time cost even though the final deploy frontier is lightweight at inference time.
Future work can improve efficiency with cheaper evaluators or more selective rollout strategies.
Second, ERSkill uses a fixed primitive library, which stabilizes search and enables experience accumulation, but also bounds the space of retrieval behaviors.
Future extensions may support primitive discovery, allowing new retrieval or processing operators to be proposed and verified during evolution.
Finally, our experiments focus on long-term memory question answering.
Extending adaptive retrieval skills to planning, tool use, personalization, and interactive decision making is a promising direction.

% \section{Technical appendices and supplementary material}
% Technical appendices with additional results, figures, graphs, and proofs may be submitted with the paper submission before the full submission deadline (see above). You can upload a ZIP file for videos or code, but do not upload a separate PDF file for the appendix. There is no page limit for the technical appendices. 

% Note: Think of the appendix as ``optional reading'' for reviewers. The paper must be able to stand alone without the appendix; for example, adding critical experiments that support the main claims to an appendix is inappropriate. 